\theoremstyle{plain}
\newtheorem{theorem}{Theorem}[section]
\newtheorem{lemma}[theorem]{Lemma}
\newtheorem{corollary}[theorem]{Corollary}
\theoremstyle{definition}
\newtheorem{definition}[theorem]{Definition}
\theoremstyle{remark}
\definecolor{mydarkblue}{rgb}{0,0.08,0.45}
\definecolor{blueberry}{RGB}{4,51,255}
\newcommand{\overlabel}[1]{\overset{(\text{#1})}}
\newcommand\addstarred[1]{%
    \expandafter\let\csname\string#1@nostar\endcsname#1%
    \edef#1{\noexpand\@ifstar\expandafter\noexpand\csname\string#1@star\endcsname\expandafter\noexpand\csname\string#1@nostar\endcsname}%
    \expandafter\newcommand\csname\string#1@star\endcsname%
}
\newcommand{\1}[1]{\mathbbm{1}{\{#1\}}}
\DeclarePairedDelimiter\abs{\lvert}{\rvert}
\DeclareMathOperator*{\argmax}{arg\,max}
\DeclareMathOperator*{\argmin}{arg\,min}
\renewcommand{\ge}{\geqslant}
\renewcommand{\le}{\leqslant}
\renewcommand{\P}{\mathbb{P}}
\newcommand{\E}{\mathbb{E}}
\newcommand{\type}[1]{Type-\uppercase\expandafter{\romannumeral#1}}
\DeclareMathOperator*{\KL}{{\normalfont KL}}
\DeclareMathOperator*{\kl}{{\normalfont kl}}
\newcommand{\rew}{\text{\normalfont (R)}}
\newcommand{\duel}{\text{\normalfont (D)}}
\newcommand{\suff}{\text{\normalfont (Suff)}}
\newcommand{\rregret}{R_T^{\rew}}
\newcommand{\dregret}{R_T^{\duel}}
\newcommand{\regret}{R_T}
\newcommand{\exparm}{k_t^{\text{exp}}}
\newcommand{\cridus}{\text{\normalfont CR}}
\newcommand{\mab}{\textsc{MAB}\xspace}
\newcommand{\drbandit}{\textsc{DR-MAB}\xspace}
\newcommand{\elimfusion}{\textsc{ElimFusion}\xspace}
\newcommand{\defusion}{\textsc{DecoFusion}\xspace}
\let\oldalgorithmiccomment=\algorithmiccomment
\renewcommand\algorithmiccomment[1]{
  \textcolor{gray}{\#\ \oldalgorithmiccomment{#1}}
}
\newcommand{\LINECOMMENT}[1]{\textcolor{gray}{\#\ \textbf{#1}}}
\title{Fusing Reward and Dueling Feedback in Stochastic Bandits}
\author{
  Xuchuang Wang$^1$, Qirun Zeng$^2$, Jinhang Zuo$^2$, Xutong Liu$^3$,
  \and
   Mohammad Hajiesmaili$^1$, John C.S. Lui$^4$, Adam Wierman$^5$\\
  \\
  $^1$University of Massachusetts, Amherst, MA, USA\\
  $^2$City University of Hong Kong, Hong Kong\\
  $^3$Carnegie Mellon University, Pittsburgh, PA, USA\\
  $^4$The Chinese University of Hong Kong, Hong Kong\\
  $^5$California Institute of Technology, Pasadena, CA, USA\\
}
\date{}  
\begin{document}

\maketitle




\begin{abstract}
  This paper investigates the fusion of absolute (reward) and relative (dueling) feedback in stochastic bandits,
  where both feedback types are gathered in each decision round.
  We derive a regret lower bound, demonstrating that an efficient algorithm may incur only the smaller among the reward and dueling-based regret for each individual arm.
  We propose two fusion approaches:
  (1) a simple elimination fusion algorithm that leverages both feedback types to explore all arms and unifies collected information by sharing a common candidate arm set,
  and (2) a decomposition fusion algorithm that selects the more effective feedback to explore the corresponding arms
  and
  randomly assigns one feedback type for exploration and the other for exploitation in each round.
  The elimination fusion experiences a suboptimal multiplicative term of the number of arms in regret due to the intrinsic suboptimality of dueling elimination.
  In contrast, the decomposition fusion achieves regret matching the lower bound up to a constant under a common assumption.
  Extensive experiments confirm the efficacy of our algorithms and theoretical results.
\end{abstract}

\section{Introduction}

\begin{figure*}[tp]
    \centering
    \begin{minipage}{0.5\textwidth}
        \centering
        \captionof{table}{Regret bounds for \drbandit
        }
        \label{tab:regret-bounds}
        \resizebox{\textwidth}{!}{
            \begin{tabular}{ll}
                \toprule
                \textbf{Algorithm}
                 & \textbf{Regret Bound}
                \\
                \midrule
                No Fusion$^\dagger$
                 & \(O\left( \sum_{k\neq 1} \frac{(\alpha\Delta_k^{\rew} + (1-\alpha)\Delta^{\duel}_k)\log T}{\min\{(\Delta_k^{\rew})^2, (\Delta^{\duel}_k)^2\}} \right)\)
                \\
                \elimfusion (Alg.~\ref{alg:elimination-fusion})
                 & \(O\left( \sum_{k\neq 1} \frac{(\alpha\Delta_k^{\rew} + (1-\alpha)\Delta^{\duel}_k)\log T}{\max\{(\Delta_k^{\rew})^2, (\Delta^{\duel}_k)^2/K\}} \right)\)
                \\
                \defusion (Alg.~\ref{alg:defusion})
                 & \(O\left( \sum_{k\neq 1}\frac{\log T}{\max\{\Delta_k^{\rew}/\alpha, \Delta^{\duel}_k/(1-\alpha)\}} \right)\)
                \\
                Simplified LB (Cor.~\ref{cor:simplified-lower-bound})
                 & \(\Omega\left( \sum_{k\neq 1}\frac{\log T}{\max\{\Delta^{\rew}_k/\alpha, \Delta^{\duel}_k/(1-\alpha)\}} \right)\)
                \\
                \bottomrule
            \end{tabular}
        }
        \footnotesize{
            $^\dagger$~Two separate algorithms for reward and dueling feedback.}
    \end{minipage}
    \hfill
    \begin{minipage}{0.48\textwidth}
        \centering
        \resizebox{\textwidth}{!}{
            \begin{tikzpicture}
                \draw [<->, line width= 2, draw=blue] (0,0)--(8,0);

                \node at (0,-0.3) {\textcolor{blue}{\(\alpha=0\)}};
                \node at (8,-0.3) {\textcolor{blue}{\(\alpha=1\)}};

                \draw (0, 0.75) node[align=center]
                {\(O(\log T)\)};
                \draw (8, 0.75) node[align=center]
                {\(O(\log T)\)};

                \draw (4, 0.75) node[align=center]
                {\(O\left( \sum_{k\neq 1} \frac{(\alpha\Delta_k^{\rew} + (1-\alpha)\Delta^{\duel}_k)\log T}{\max\{(\Delta_k^{\rew})^2, (\Delta^{\duel}_k)^2/K\}} \right)\)};
                \draw (4, 1.4) node[align=center]
                {\textbf{\elimfusion} (Alg.~\ref{alg:elimination-fusion})};

                \draw (0, -1.25) node[align=center]
                {Constant};
                \draw (8, -1.25) node[align=center]
                {Constant};
                \draw (4, -1.25) node[align=center]
                {\(O\left( \sum_{k\neq 1}\frac{\log T}{\max\{\Delta_k^{\rew}/\alpha, \Delta^{\duel}_k/(1-\alpha)\}} \right)\)};
                \draw (4, -0.6) node[align=center]
                {\textbf{\defusion} (Alg.~\ref{alg:defusion})};

            \end{tikzpicture}}
        \captionof{figure}{Impact of parameter \(\alpha\) on regret \(\regret\)}
        \label{fig:alpha-impact-overview}
    \end{minipage}
\end{figure*}

\emph{Relative feedback} is a type of feedback that provides information about the relative quality of two or more items (i.e., which is better) rather than their \emph{absolute} quality.
The power of relative feedback has been widely recognized in various fields. For example, in the human alignment stage of the large language model (LLM) training~\citep{ouyang2022training,rafailov2024direct}, the human annotators are asked to compare the quality of two LLM-generated sentences, rather than scoring them using absolute values.
Given LLMs' popularity, designing algorithms that can effectively leverage relative feedback has become a critical research topic in the online learning community, e.g., reinforcement learning with human feedback (RLHF)~\citep{wang2023rlhf,xiong2024iterative},
bandits with human preferences~\citep{ji2023provable},
dueling bandits~\citep{yan2022human,saha2022versatile}, etc.
This line of work complements the traditional online learning algorithms that rely on absolute feedback, e.g., reward in bandits and reinforcement learning~\citep{lai1985asymptotically,szepesvari2022algorithms}.
We refer to Appendix~\ref{sec:related-works} for a detailed discussion on related works.

Most of the prior literature has mainly focused on algorithms that leverage either relative or absolute feedback alone. In many real-world applications, however, these two types of feedback coexist.  For example, in LLM training, the human annotators may provide both absolute scores and relative comparisons~\citep{ouyang2022training}.
Another example is in recommendation systems, where the user feedback can be both absolute ratings and pairwise comparisons (choose one among two recommendations)~\citep{zhang2020conversational}. Despite its practical relevance, there is no prior literature where the underlying algorithms simultaneously utilize (dubbed as \emph{fuse}) the absolute and relative feedback.

In this paper, we investigate how to fuse the absolute and relative feedback under the framework of
the stochastic multi-armed bandit (\mab)~\citep{lattimore2020bandit}, which is a fundamental problem in online learning.
To align with the terminology in bandit literature, we refer to the absolute feedback as \emph{reward} and the relative feedback as \emph{dueling} feedback.
We consider a \mab with \(K \in \mathbb{N}^+\) arms, where each arm \(k\) is associated with a reward distribution with an unknown mean,
and each arm pair \((k_1,k_2)\) is associated with an unknown dueling probability, representing the probability that arm \(k_1\) is preferred over arm \(k_2\) in a comparison.
In each round \(t\), the learner selects a tuple of arms \(\{k_t, (k_{1, t}, k_{2, t})\}\), where the reward feedback is the observed reward drawn from the reward distribution of arm \(k_t\), and the dueling feedback is the observed winning arm from the comparison between arms \(k_{1, t}\) and \(k_{2, t}\).

The goal of the learner is to minimize the accumulative \emph{regret} \(\regret\) over \(T\in \mathbb{N}^+\) rounds.
The regret attributes to reward and dueling feedback, called reward-based regret \(\rregret\) and dueling-based regret \(\dregret\).
The \(\rregret\) (resp. \(\dregret\)) measures the difference between the reward (resp., dueling probability) of the selected arms (resp., arm pairs) and that of the optimal arm (resp., optimal arm pair) in hindsight.
One notable advantage of the dueling feedback is that querying relative feedback from suboptimal arm pairs is often more cost-efficient than that of the absolute feedback~\citep{ouyang2022training}.
Taking this cost difference into account, we introduce a weight parameter \(\alpha \in [0,1]\) and define the final regret as \(\regret \coloneqq \alpha \rregret + (1-\alpha)\dregret\).
We call this model \emph{dueling-reward multi-armed bandit} (\drbandit) and present its details in Section~\ref{sec:model}.
In this paper, we aim to answer the following central questions:

\emph{
    How do we fuse reward and dueling feedback in \drbandit so as to reduce the accumulative regret?
    How does the weight factor \(\alpha\) influence the fusion process and the regret?
}

\textbf{Technical challenges.} One key challenge comes from the orthogonal nature of reward and dueling feedback, each providing information on one aspect of the arms (i.e., reward means vs. dueling probabilities), both of which are incomparable.
That is, the heterogeneity of both feedback types makes it difficult to fuse them.
Furthermore, the relative costs of reward and dueling feedback (parameter \(\alpha\) in the regret) further complicate the fusion process.
For example,
while in general, the learner needs to balance the utilization of two feedback types to minimize regret,
there exist cases where one feedback is much cheaper than the other, making it more beneficial to rely solely on the cheaper one.


\subsection{Contributions}
This paper investigates the fusion of reward and dueling feedback in stochastic \mab setting. We summarize all our technical contributions in Table~\ref{tab:regret-bounds}. Below are the details of our technical contributions. 

We formulate the dueling-reward \mab problem and study its regret lower bound in Section~\ref{sec:model}.
We first provide a general lower bound and then, with an additional assumption, derive a simplified version as \(\Omega( \sum_{k}
\min\{\alpha/ \Delta^{\rew}_k, (1-\alpha) / \Delta^{\duel}_k\}\log T )\)\footnote{Most big-O and big-\(\Omega\) formulas in the introduction are informally tailored for readability. We refer the readers to corresponding theorems for their formal expressions.}, where the \(\Delta^{\rew}_k\) and \(\Delta^{\duel}_k\) are the reward and dueling gaps of arm \(k\) (definition in Section~\ref{sec:model}).
This bound highlights the benefit of fusing reward and dueling feedback: for each suboptimal arm \(k\), among the two weighted regret costs induced from reward and dueling feedback, fusion makes it possible to only pay the smaller one.

We first propose a simple elimination fusion (\elimfusion) algorithm in Section~\ref{sec:elimination-fusion}.
\elimfusion applies the elimination algorithms for stochastic and dueling bandits separately to both types of feedback and then fuses their information by sharing two algorithms' candidate arm sets (arms not identified as suboptimal yet). \elimfusion enjoys the benefit of fusion suggested by the lower bound and achieves a regret bound of \(O( \sum_{k} (\alpha \Delta_k^{\rew} + (1-\alpha)\Delta_k^{\duel})
\min\{1 / (\Delta^{\rew}_k)^2, K / (\Delta^{\duel}_k)^2\}\log T  )\),
where the regret cost of dueling feedback is suboptimal in terms of the factor \(K\), which is inherited from the intrinsic suboptimality of the dueling elimination algorithm.

We then devise a decomposition fusion (\defusion) algorithm in Section~\ref{sec:defusion}.
Notice that the lower bound suggests a decomposition of the suboptimal arms into two subsets:
one with the smaller regret cost induced by reward feedback and the other smaller by dueling feedback.
\defusion leverages this insight by approximating this arm decomposition.
However, without the knowledge of the reward and dueling gaps, the approximated decomposition may deviate from the ground-truth one, which we further address by proposing a novel randomized decision-making strategy that explicitly separates the exploration and exploitation.
Putting all together, \defusion achieves a regret bound of \(O( \sum_{k}
\min\{\alpha/ \Delta^{\rew}_k, (1-\alpha) / \Delta^{\duel}_k\} \log T)\), which matches the simplified lower bound (instead of the general form) and outperforms \elimfusion.

Furthermore, as Figure~\ref{fig:alpha-impact-overview} shows, \defusion is able to fully utilize a free exploration property when either \(\alpha = 0\) (free reward) or \(\alpha = 1\) (free dueling), to achieve a constant (i.e., independent from \(T\)) regret, which is impossible for \elimfusion.
Lastly, we conduct simulations to evaluate the performance of the proposed algorithms in Section~\ref{sec:experiments}.


\section{Model Formulation}\label{sec:model}

Consider a dueling-reward multi-armed bandit (\drbandit) with \(K\in \mathbb{N}^+\) arms. Each arm \(k\in\mathcal{K}\coloneqq\{1,2,\dots,K\}\) is associated with a Bernoulli\footnote{The choice of Bernoulli distributions is mainly for simplicity. One can extend the assumption to distributions with bounded interval support, i.e., \([0,1]\)-bounded, via more sophisticated analysis.} reward distribution with unknown mean \(\mu_k \in (0,1)\).
Each arm pair \((k,\ell)\in \mathcal{K}^2\) is associated with a dueling probability \(\nu_{k,\ell}\in (0,1)\),
representing the probability that arm \(k\) wins arm \(\ell\) in a duel.
Especially, \(\nu_{k,\ell} > 0.5\) if \(\mu_k > \mu_{\ell}\) and \(\nu_{k,\ell} = 0.5\) if \(\mu_k = \mu_{\ell}\).
and arm pairs \((k,\ell)\) and \((\ell, k)\) are symmetric, i.e., \(\nu_{k,\ell} = 1 - \nu_{\ell,k}\). Besides this ordering relation between reward means \(\mu_k\) and dueling probabilities \(\nu_{k,\ell}\), we do not assume or utilize any other specific relation between \(\mu_k\) and \(\nu_{k,\ell}\).
For simplicity, we assume all reward means are distinct, and
these arms are labeled in descending order regarding their reward means, i.e.,
\(\mu_1 > \mu_2 > \dots > \mu_K\). Therefore, arm \(1\) is the unique \emph{optimal arm} and the \emph{Condorcet winner} (i.e., \(\nu_{1,k}> 0.5\) for any arm \(k>1\))~\citep{urvoy2013generic}.

Consider \(T\in\mathbb{N}^+\) rounds in \drbandit.
In each round \(t \in \{1,2,\dots, T\}\), the learner picks a tuple \(\{k_t, (k_{1,t}, k_{2,t})\}\) consisting of an arm \(k_t\) and a pair of arms \((k_{1,t}, k_{2,t})\).
Then, the learner observes the reward realization \(X_{k_t,t}\) of the chosen arm \(k_t\) and the winner of the pair duel \(Y_{k_{1,t}, k_{2,t}, t}\), where \(X_{k_t,t}\) is sampled from the Bernoulli distribution with mean \(\mu_{k_t}\),
and \(Y_{k_{1,t}, k_{2,t}, t}\) is determined by a sample from the Bernoulli distribution with mean \(\nu_{k_{1,t}, k_{2,t}}\):
\(Y_{k_{1,t}, k_{2,t}, t} = k_{1,t}\) if the sample is \(1\), and \(Y_{k_{1,t}, k_{2,t}, t} = k_{2,t}\) if it is \(0\).
Both \(X_{k_t,t}\) and \(Y_{k_{1,t}, k_{2,t}, t}\) are independent across rounds and arms (pairs).
We call the \(X_{k_t, t}\) as the \emph{reward feedback} (absolute) and the \(Y_{k_{1,t}, k_{2,t}, t}\) as the \emph{dueling feedback} (relative), following the convention of bandit literature.

\textbf{Regret objective.}
The learner aims to minimize the accumulative \emph{regret} \(\regret\) over \(T\) rounds,
composed by the reward-based regret \(\rregret\)
and the dueling-based regret \(\dregret\).
Denote \(\Delta_k^{\rew} \coloneqq \mu_1 - \mu_k\)
and \(\Delta_k^{\duel} \coloneqq \nu_{1,k} - 0.5\) as the reward and dueling gaps between the optimal arm \(1\) and the suboptimal arm \(k\), respectively.
Then, \(\rregret\) is defined as the accumulation of the reward gaps \(\Delta_{k_t}^{\rew}\) of all chosen arms \(k_t\) over the \(T\) rounds,
while \(\dregret\) is defined as the accumulation of the average of the dueling gaps \({(\Delta_{k_{1,t}}^{\duel} + \Delta_{k_{2,t}}^{\duel})}/{2}\) of the picked arm pairs \((k_{1,t}, k_{2,t})\), i.e., \begin{align}
    \rregret & \coloneqq \sum_{t=1}^T \Delta_{k_t}^{\rew} = T\cdot \mu_1  - \sum_{t=1}^T \mu_{k_t},
    \\
    \dregret & \coloneqq \sum_{t=1}^T \frac{\Delta_{k_{1,t}}^{\duel} + \Delta_{k_{2,t}}^{\duel}}{2} =  \sum_{t=1}^T \frac{\nu_{1, k_{1,t}} + \nu_{1, k_{2,t}} - 1}{2}.
\end{align}
Lastly, we introduce a parameter \(\alpha \in [0,1]\) to balance the impact of the reward-based and the dueling-based regrets on the aggregated regret \(\regret\), defined as follows,
\begin{align}\label{eq:regret}
    \regret & \coloneqq \alpha\rregret + (1-\alpha) \dregret.
\end{align}


\subsection{Lower Bound}\label{subsec:lower-bound}

This section provides the regret lower bound for any consistent algorithm (Definition~\ref{def:consistent-algorithm}) in \drbandit.
We denote \(N_{k,t}\) as the number of times arm \(k\) is picked in the first \(t\) rounds for reward feedback, and \(M_{k,\ell,t}\) as the number of times the pairs \((k,\ell)\) and \((\ell, k)\) (due to their symmetry) are picked in the first \(t\) rounds for dueling feedback.

\begin{definition}[Consistent algorithm]\label{def:consistent-algorithm}
    An algorithm is called \emph{consistent} for \drbandit if for any suboptimal arm \(k\neq 1\) and parameter \(\gamma > 0\), it fulfills
    \(
    \E[N_{k,T}] = o(T^\gamma) \text{ and }
    \E[M_{k,\ell, T}] = o(T^\gamma)
    \) for any arm \(\ell \neq k\).
\end{definition}

The consistent definition covers all algorithms that achieve logarithmic regrets in \drbandit, e.g., UCB and elimination~\citep{auer2002using,auer2010ucb}.
This definition is a \emph{``generalization''} of the consistent policy in stochastic bandits~\citep{lai1985asymptotically}.
We first provide a lemma to bound the number of arm pulling and pair dueling.

\begin{lemma}\label{lma:information-lower-bound}
    For any suboptimal arm \(k\neq 1\), under any consistent algorithm, we have
    \begin{equation}\label{eq:information-lower-bound}
        \begin{split}
             & N_{k,T}\kl(\mu_k, \mu_1) +
            \sum_{\ell < k} M_{k,\ell, T} \kl\left(\nu_{k,\ell}, 0.5\right)
            \\
             & \qquad\qquad\qquad\qquad\qquad\qquad \ge  (1-o(1)) \log T,
        \end{split}
    \end{equation}
    where \(\kl(p,q) \coloneqq p \log \frac{p}{q} + (1 - p) \log \frac{1-p}{1-q}\) is KL-divergence between two Bernoulli distributions with means \(p\) and \(q\).
\end{lemma}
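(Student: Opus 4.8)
The plan is to run the standard change-of-measure lower-bound argument, but applied to an alternative \drbandit instance that perturbs \emph{both} the reward of arm $k$ and the dueling probabilities involving arm $k$, so that the divergence-decomposition produces exactly the two families of terms appearing in \eqref{eq:information-lower-bound} (which I read as a statement about the expected counts $\E[N_{k,T}]$ and $\E[M_{k,\ell,T}]$).

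First I would fix a suboptimal arm $k$ and a small $\delta>0$ and build an alternative instance $\lambda'$ that agrees with the true instance $\lambda$ everywhere except: the reward mean of arm $k$ is raised from $\mu_k$ to $\mu_1+\delta$, and for every $\ell<k$ the dueling probability is raised from $\nu_{k,\ell}<0.5$ to $\nu_{k,\ell}'=0.5+\delta$. Since $\mu_1>\mu_\ell$ for all $\ell$ and $\nu_{k,\ell}>0.5$ already holds for $\ell>k$, this makes arm $k$ the unique optimal arm and the Condorcet winner of $\lambda'$, so $\lambda'$ is a legitimate \drbandit instance and differs from $\lambda$ only in the listed parameters. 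Here it is essential to notice that the model's ordering constraint $\nu_{k,\ell}>0.5\iff\mu_k>\mu_\ell$ forces \emph{both} channels to be perturbed: raising $\mu_k$ alone would push $\lambda'$ outside the model, and then consistency could not be invoked there. This is precisely why both the reward and dueling divergences must enter the bound.

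Next I would apply a divergence-decomposition (chain-rule) identity to the laws $\P_\lambda$ and $\P_{\lambda'}$ of the $T$-round observation history. Because within each round the reward feedback of $k_t$ and the duel outcome of $(k_{1,t},k_{2,t})$ are independent Bernoulli draws, the relative entropy factorizes and all shared-parameter terms cancel, leaving
\[
\KL(\P_\lambda\,\|\,\P_{\lambda'})=\E_\lambda[N_{k,T}]\,\kl(\mu_k,\mu_1+\delta)+\sum_{\ell<k}\E_\lambda[M_{k,\ell,T}]\,\kl(\nu_{k,\ell},0.5+\delta).
\]
To lower bound the left side I would use the data-processing inequality on the indicator of the event $A=\{N_{k,T}\le T/2\}$, giving $\KL(\P_\lambda\|\P_{\lambda'})\ge\kl(\P_\lambda(A),\P_{\lambda'}(A))$, together with the elementary inequality $\kl(p,q)\ge p\log\frac{1}{q}-\log 2$. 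Consistency (Definition~\ref{def:consistent-algorithm}) then supplies both probabilities: under $\lambda$ arm $k$ is suboptimal, so $\E_\lambda[N_{k,T}]=o(T^\gamma)$ and Markov gives $\P_\lambda(A)\to1$; under $\lambda'$ arm $k$ is optimal, so low reward regret $\rregret$ forces $\sum_{j\neq k}\E_{\lambda'}[N_{j,T}]=o(T^\gamma)$, i.e. $\E_{\lambda'}[T-N_{k,T}]=o(T^\gamma)$ and hence $\P_{\lambda'}(A)=o(T^{\gamma-1})$. Substituting and dividing by $\log T$ yields $\liminf_T \KL(\P_\lambda\|\P_{\lambda'})/\log T\ge 1-\gamma$ for each fixed $\gamma$, hence $\ge1$ as $\gamma\downarrow0$; with the decomposition this gives, for every fixed $\delta>0$,
\[
\E_\lambda[N_{k,T}]\,\kl(\mu_k,\mu_1+\delta)+\sum_{\ell<k}\E_\lambda[M_{k,\ell,T}]\,\kl(\nu_{k,\ell},0.5+\delta)\ge(1-o(1))\log T.
\]

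The main obstacle is removing $\delta$ to reach the boundary divergences $\kl(\mu_k,\mu_1)$ and $\kl(\nu_{k,\ell},0.5)$. Since $\kl(\mu_k,q)$ is increasing for $q>\mu_k$, the perturbed divergences are strictly \emph{larger} than the boundary ones, so a naive $\delta\downarrow0$ limit moves in the wrong direction and does not transfer the bound. I would resolve this by a compactness argument on the normalized counts $a_T=\E_\lambda[N_{k,T}]/\log T$ and $b_{\ell,T}=\E_\lambda[M_{k,\ell,T}]/\log T$: along any subsequence realizing the $\liminf$ of the boundary expression, either the counts are unbounded (the expression then diverges and the claim is trivial) or they admit a convergent sub-subsequence with limits $(a^*,b^*)$; passing $T\to\infty$ in the displayed $\delta$-bound gives $a^*\kl(\mu_k,\mu_1+\delta)+\sum_\ell b^*_\ell\kl(\nu_{k,\ell},0.5+\delta)\ge1$ for every $\delta$, and continuity of $\kl$ lets me then send $\delta\downarrow0$ to obtain $a^*\kl(\mu_k,\mu_1)+\sum_\ell b^*_\ell\kl(\nu_{k,\ell},0.5)\ge1$. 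As this holds for every subsequential limit, the $\liminf$ of the boundary expression is at least $1$, which is exactly \eqref{eq:information-lower-bound}. I would finally double-check that the $o(1)$ terms (arising from $\gamma$ and from Markov) are uniform in $\delta$ so that the interchange of the $T\to\infty$ and $\delta\downarrow0$ limits is legitimate.
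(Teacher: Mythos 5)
Your proposal is correct, but it takes a genuinely different route from the paper's proof, and the two establish slightly different forms of the lemma. You read \eqref{eq:information-lower-bound} as a bound on the \emph{expected} counts and prove it via the modern divergence-decomposition-plus-data-processing argument: a single application of $\KL(\P\,\|\,\P')\ge \kl(\P(A),\P'(A))$ with $A=\{N_{k,T}\le T/2\}$, invoking consistency under both instances exactly as you describe. The paper instead proves a high-probability statement about the \emph{random} counts: it introduces the empirical log-likelihood ratio $\widehat{\KL}(\mathcal{I},\mathcal{I}')$, performs the change of measure pathwise on the event $\mathcal{D}_1\cap\mathcal{D}_2$, and controls the residual event $\mathcal{D}_1\setminus\mathcal{D}_2$ with a maximal law of large numbers~\citep[Lemma~10.5]{bubeck2010bandits}, concluding $\P(\mathcal{D}_1)=o(1)$; your expectation version is formally weaker, but it is exactly what Theorem~\ref{thm:regret-lower-bound} consumes (the regret decomposition there is taken in expectation), so nothing downstream is lost. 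Your instance construction matches the paper's in spirit---both raise the reward mean of arm $k$ above $\mu_1$ and lift $\nu_{k,\ell}$ above $\tfrac12$ for every $\ell<k$, and your observation that the model's ordering constraint forces \emph{both} channels to be perturbed is precisely the point of the construction---but the paper calibrates the dueling perturbation in KL units, choosing $\epsilon_\ell$ so that $\kl(\nu_{k,\ell},\tfrac12+\epsilon_\ell)=\kl(\nu_{k,\ell},\tfrac12)+\epsilon$, which places the perturbed divergences an additive $\epsilon$ above the boundary ones, whereas you use a uniform $+\delta$ and remove it by a compactness argument on the normalized counts. Your subsequence argument is valid and in fact treats the monotonicity issue ($\kl(\mu_k,\mu_1+\delta)>\kl(\mu_k,\mu_1)$, so the naive limit points the wrong way) more explicitly than the paper, whose closing step (``letting $\epsilon$ be infinitesimally small'') glosses over it; note also that since you send $T\to\infty$ for each fixed $\delta$ \emph{before} letting $\delta\downarrow 0$, no uniformity in $\delta$ is required, so the uniformity check you flag at the end is unnecessary.
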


The proof of Lemma~\ref{lma:information-lower-bound} (formally in Appendix~\ref{app:proof-lower-bound}) is based on the information-theoretic lower bounds for MAB~\citep{lai1985asymptotically} and the dueling bandits~\citep{komiyama2015regret}. This proof needs a careful selection of pairs of the \drbandit instances and constructions of the key information quantities, so that the attributions from reward and dueling feedback would both appear in the LHS of~\eqref{eq:information-lower-bound}.

The LHS of~\eqref{eq:information-lower-bound} can be interpreted as the amount of ``information'' collected by the learner for arm \(k\) in the \(T\) rounds, and its two terms are ``information'' from the reward and dueling feedback.
Their additive relation in the LHS implies that the ``information'' can be attributed to two parts: the reward-based (first term) and the dueling-based (second term).
Especially, the second term for dueling-based information of suboptimal arm \(k\) is the sum over all pairs between the arm \(k\) and other better arms \(\ell\, (< k)\), implying that the \emph{effective} dueling-based information is collected by dueling with better arms.

Although Lemma~\ref{lma:information-lower-bound} suggests that the necessary exploration on a suboptimal arm \(k\) may be fulfilled by aggregating information collected from the reward and dueling feedback, we prove the following theorem to show that the potentially optimal way to minimize regret due to each suboptimal arm \(k\) is to focus on either the reward-based or the dueling-based exploration, \emph{exclusively}.

\begin{theorem}\label{thm:regret-lower-bound}
    For any consistent algorithm and regret balanced by \(\alpha \in [0,1]\), the following regret lower bound holds,
    \begin{equation}\label{eq:regret-lower-bound}
        \begin{split}
             & \liminf_{T\to\infty} \frac{\E[R_T]}{\log T} \ge
            \sum_{k\neq 1} \min\left\{  \frac{\alpha\Delta_k^{\rew}}{\kl(\mu_k, \mu_1)}
            , \min_{\ell < k} \frac{(1{-}\alpha) (\Delta^{\duel}_k {+} \Delta^{\duel}_{\ell})}{\kl\left( \nu_{k,\ell}, \frac 1 2 \right)}\right\}.
        \end{split}
    \end{equation}
\end{theorem}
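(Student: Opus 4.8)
The plan is to reduce the theorem to a family of \emph{decoupled} per-arm minimization problems, each constrained by a single instance of Lemma~\ref{lma:information-lower-bound}, and then solve each as an elementary linear program.

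First I would take expectations in $\regret = \alpha\rregret + (1-\alpha)\dregret$ and rewrite each part as a count-weighted sum. The reward part is immediately $\E[\rregret] = \sum_{k\neq 1}\Delta_k^{\rew}\,\E[N_{k,T}]$. For the dueling part I would regroup the round-wise sum by the unordered pair dueled, obtaining $\E[\dregret] = \sum_{\{k,\ell\}}\tfrac12(\Delta_k^{\duel}+\Delta_\ell^{\duel})\,\E[M_{k,\ell,T}]$; self-pairs only add regret while carrying no information, so they are irrelevant for a lower bound. The key structural observation is that each unordered pair $\{k,\ell\}$ with $k>\ell$ can be \emph{charged entirely to its worse arm} $k$: it contributes all of its dueling regret to $k$, and because the dueling sum in~\eqref{eq:information-lower-bound} ranges only over better arms $\ell<k$, it enters the information constraint of arm $k$ and of no other arm. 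Likewise $N_{k,T}$ appears only in arm $k$'s reward regret and only in arm $k$'s constraint. Hence the total expected regret splits as $\E[\regret]=\sum_{k\neq 1} r_k$ with
\[
    r_k = \alpha\Delta_k^{\rew}\,\E[N_{k,T}] + (1-\alpha)\sum_{\ell<k}\tfrac12(\Delta_k^{\duel}+\Delta_\ell^{\duel})\,\E[M_{k,\ell,T}],
\]
where the variables entering $r_k$ are disjoint across $k$ and are coupled only through arm $k$'s own information constraint.

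Given this decoupling, I would lower-bound each $r_k$ independently by solving the linear program: minimize the linear objective $r_k$ over nonnegative counts subject to the single inequality $\E[N_{k,T}]\kl(\mu_k,\mu_1)+\sum_{\ell<k}\E[M_{k,\ell,T}]\kl(\nu_{k,\ell},\tfrac12)\ge(1-o(1))\log T$. For any problem $\min\sum_i c_i x_i$ subject to $\sum_i a_i x_i\ge b$ with $x_i\ge 0$ and $a_i,c_i>0$, the optimum is $b\cdot\min_i(c_i/a_i)$, attained by placing the entire information budget on the channel with the smallest cost-to-information ratio. Reading off the reward channel's ratio as $\alpha\Delta_k^{\rew}/\kl(\mu_k,\mu_1)$ and each dueling channel's ratio (indexed by $\ell<k$) from the per-duel cost $\tfrac12(1-\alpha)(\Delta_k^{\duel}+\Delta_\ell^{\duel})$ against its per-duel information $\kl(\nu_{k,\ell},\tfrac12)$, the two terms of the minimum in~\eqref{eq:regret-lower-bound} appear, yielding
\[
    r_k \ge (1-o(1))\log T\cdot\min\left\{\frac{\alpha\Delta_k^{\rew}}{\kl(\mu_k,\mu_1)},\ \min_{\ell<k}\frac{(1-\alpha)(\Delta_k^{\duel}+\Delta_\ell^{\duel})}{\kl(\nu_{k,\ell},\tfrac12)}\right\}.
\]

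Finally I would sum the per-arm bounds over $k\neq 1$, divide by $\log T$, and take $\liminf_{T\to\infty}$, so the $(1-o(1))$ factor tends to $1$ and recovers~\eqref{eq:regret-lower-bound}. The conceptual heart, and the step to get right, is the charging/decoupling argument: verifying that every dueling pair lands in exactly one arm's information constraint (the worse arm's) and contributes its regret to exactly that arm, so that the global minimization separates into independent per-arm LPs with no shared exploration budget. The remaining care is bookkeeping --- matching the per-duel regret coefficient $\tfrac12(\Delta_k^{\duel}+\Delta_\ell^{\duel})$ against the per-duel information coefficient $\kl(\nu_{k,\ell},\tfrac12)$ so the ratios emerge exactly in the stated form, and passing cleanly from the form of Lemma~\ref{lma:information-lower-bound} to the asymptotic $\liminf$ statement.
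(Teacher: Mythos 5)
Your proposal follows essentially the same route as the paper's proof: decompose the expected regret into per-arm count-weighted sums, observe that Lemma~\ref{lma:information-lower-bound} constrains exactly the counts charged to arm \(k\) (so the minimization decouples across arms), and solve each per-arm linear program at a vertex, yielding the minimum of the cost-to-information ratios. One bookkeeping note: your (correct) per-duel regret coefficient \(\tfrac12(1-\alpha)(\Delta_k^{\duel}+\Delta_\ell^{\duel})\) actually produces the dueling ratio with an extra factor \(\tfrac12\), so the bound in~\eqref{eq:regret-lower-bound} does not emerge ``exactly'' as you claim --- but the paper's own regret decomposition silently drops the same \(\tfrac12\), so this constant-factor discrepancy is shared with, indeed inherited from, the paper's argument.
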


A full proof is given in Appendix~\ref{app:proof-lower-bound} and is based on minimizing the regret decomposed in terms of the sampling times \(N_{k,T}\) and \(M_{k, \ell, T}\), given the constraints in Lemma~\ref{lma:information-lower-bound}.
As both the regret minimization objective and the constraints are linear expressions, the minimization can be solved by linear programming, yielding the minimal attained in one of two end nodes of the constraint line segment in~\eqref{eq:information-lower-bound}.

Theorem~\ref{thm:regret-lower-bound} provides a regret lower bound for any consistent algorithm in \drbandit.
The sum in the RHS of~\eqref{eq:regret-lower-bound} is over all suboptimal arms \(k\neq 1\).
The two terms inside the outer minimization in the RHS correspond to the reward-based and dueling-based regrets for arm \(k\).
The outer minimization indicates that for arm \(k\), an effective algorithm should explore it through either the reward or dueling feedback, depending on which yields a smaller regret.


The second term (inner minimization) in the RHS of~\eqref{eq:regret-lower-bound} implies that for each suboptimal arm \(k\), there exists a most effective arm (competitor) to duel with, denoted as \(\ell^*_k \coloneqq \min_{\ell < k} {(\Delta^{\duel}_k + \Delta^{\duel}_{\ell})}/{\kl\left( \nu_{k,\ell}, \frac 1 2 \right)}\).
However, as discussed in~\citet[\S3.2]{komiyama2015regret}, the most effective arm \(\ell^*_k\) is usually the optimal arm, i.e., \(\ell^*_k = 1\), in many real world applications.
Then, by assuming \(\ell^*_k = 1\), the key term in the lower bound for any suboptimal arm \(k\) becomes
\(\min\{  {\alpha\Delta_k^{\rew}}/{\kl(\mu_k, \mu_1)}
, {(1-\alpha) \Delta^{\duel}_k }/{\kl\left( \nu_{k,1}, \frac 1 2 \right)}\}.\)
Further noticing the \(\kl(a, b) = \Theta((a-b)^2)\) property for the KL-divergence between two Bernoulli distributions~\citep[\S16]{lattimore2020bandit}, this term can be simplified as \(\min\{  {\alpha}/{\Delta_k^{\rew}}
, {(1-\alpha)}/{\Delta^{\duel}_k}\} = {1}/{\max\{ \Delta_k^{\rew}/\alpha,\Delta^{\duel}_k / (1-\alpha)\}}.\)
This simplification is summarized in the following corollary.

\begin{corollary}\label{cor:simplified-lower-bound}
    If for each suboptimal arm \(k\neq 1\), the most effective dueling arm \(\ell^*_k\) is the optimal arm \(1\), the regret lower bound in Theorem~\ref{thm:regret-lower-bound} can be simplified as follows, for some universal positive constant \(C\),
    \begin{align}\label{eq:simplified-lower-bound}
        \liminf_{T\to\infty}\! \frac{\E[R_T]}{\log T}\! \ge C \sum_{k\neq 1}\! \frac{1}{\max\{ \Delta_k^{\rew}{/}\alpha,\Delta^{\duel}_k {/} (1-\alpha)  \}}.
    \end{align}
\end{corollary}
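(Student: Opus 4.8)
The plan is to specialize the general bound of Theorem~\ref{thm:regret-lower-bound} summand by summand, using the hypothesis $\ell^*_k = 1$ to collapse the inner minimization and then converting the two Kullback--Leibler denominators into squared gaps. First, fix an arbitrary suboptimal arm $k\neq 1$ and consider its contribution
\[
\min\left\{ \frac{\alpha\Delta_k^{\rew}}{\kl(\mu_k,\mu_1)},\ \min_{\ell<k}\frac{(1-\alpha)(\Delta^{\duel}_k+\Delta^{\duel}_\ell)}{\kl(\nu_{k,\ell},\tfrac12)}\right\}.
\]
By assumption the inner minimization is attained at $\ell=1$, so it equals $(1-\alpha)(\Delta^{\duel}_k+\Delta^{\duel}_1)/\kl(\nu_{k,1},\tfrac12)$. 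Since $\nu_{1,1}=\tfrac12$ forces $\Delta^{\duel}_1=\nu_{1,1}-\tfrac12=0$, and since symmetry $\nu_{k,1}=1-\nu_{1,k}$ gives $\nu_{k,1}-\tfrac12=-\Delta^{\duel}_k$, the dueling branch collapses to $(1-\alpha)\Delta^{\duel}_k/\kl(\tfrac12-\Delta^{\duel}_k,\tfrac12)$. This reduces the summand to a clean two-way minimum in the reward and dueling gaps of arm $k$ alone.

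Next I would bound each Bernoulli KL-divergence from above by the quadratic $\kl(p,q)\le (p-q)^2/(q(1-q))$ (which follows from $\kl\le\chi^2$), so as to lower bound the two ratios. For the dueling branch $q=\tfrac12$, hence $q(1-q)=\tfrac14$ and $\kl(\tfrac12-\Delta^{\duel}_k,\tfrac12)\le 4(\Delta^{\duel}_k)^2$, giving $(1-\alpha)\Delta^{\duel}_k/\kl(\cdot)\ge \tfrac14\,(1-\alpha)/\Delta^{\duel}_k$ with a genuinely universal constant. For the reward branch $\kl(\mu_k,\mu_1)\le (\Delta_k^{\rew})^2/(\mu_1(1-\mu_1))$, giving $\alpha\Delta_k^{\rew}/\kl(\cdot)\ge \mu_1(1-\mu_1)\,\alpha/\Delta_k^{\rew}$. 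Setting $C\coloneqq\min\{\mu_1(1-\mu_1),\tfrac14\}=\mu_1(1-\mu_1)$ and pulling the constant through the minimum via $\min\{c_r x, c_d y\}\ge \min\{c_r,c_d\}\min\{x,y\}$, the summand is at least $C\min\{\alpha/\Delta_k^{\rew},\,(1-\alpha)/\Delta^{\duel}_k\}$.

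Finally I would apply the elementary identity $\min\{a,b\}=1/\max\{1/a,1/b\}$, valid for positive $a,b$, to rewrite the two-way minimum as $1/\max\{\Delta_k^{\rew}/\alpha,\,\Delta^{\duel}_k/(1-\alpha)\}$, and summing over $k\neq 1$ yields \eqref{eq:simplified-lower-bound}. The only real subtlety is bookkeeping on the direction of the KL inequality: because the KL terms sit in denominators, a lower bound on the regret requires an \emph{upper} bound on KL, so the quadratic $\chi^2$ bound is exactly what is needed (a matching lower bound $\kl=\Omega((p-q)^2)$ would point the wrong way). I would also flag that the constant is ``universal'' only insofar as the reward factor $\mu_1(1-\mu_1)$ is bounded away from zero; strictly it is instance-dependent through how close $\mu_1$ lies to the boundary $\{0,1\}$, whereas the dueling factor $\tfrac14$ is truly universal because the comparison threshold is always $\tfrac12$.
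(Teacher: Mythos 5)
Your proof is correct and follows essentially the same route as the paper's in-text derivation of the corollary: specialize the inner minimum to \(\ell=1\) (using \(\Delta_1^{\duel}=0\) and \(\nu_{k,1}=\tfrac12-\Delta_k^{\duel}\)), upper-bound each Bernoulli KL by its quadratic \(\chi^2\) surrogate where the paper simply invokes \(\kl(a,b)=\Theta((a-b)^2)\), and rewrite the two-way minimum as a reciprocal maximum. Your closing caveat is also well taken and slightly sharper than the paper: since \(\kl(\mu_k,\mu_1)\) cannot be bounded above by a universal multiple of \((\Delta_k^{\rew})^2\) as \(\mu_1\) approaches the boundary of \((0,1)\), the constant \(C=\mu_1(1-\mu_1)\) is genuinely instance-dependent on the reward side (only the dueling factor \(\tfrac14\) is truly universal, because the comparison threshold is fixed at \(\tfrac12\)), so the paper's claim of a ``universal'' constant implicitly requires \(\mu_1\) bounded away from \(\{0,1\}\).
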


From the regret lower bounds in~\eqref{eq:regret-lower-bound} and~\eqref{eq:simplified-lower-bound}, one may get a counter-intuitive observation: when \(\alpha = 0\) or \(1\), the regret lower bound would become sub-logarithmic for any consistent algorithm, which is unusual in the bandit literature.
Later in this paper, we will show that a sub-logarithmic \(o(\log T)\) regret is actually achievable (in fact, \(T\)-independent constant regret) in these two scenarios, revealing a unique phenomenon of \drbandit.

\section{\elimfusion: Fusing Reward and Dueling via Elimination}\label{sec:elimination-fusion}

The estimations of reward means \(\mu_k\)
and dueling probabilities \(\nu_{k,\ell}\) are orthogonal as their observations are independently sampled from distributions with different parameters.
This orthogonality makes it difficult to directly combine these two types of feedback in online learning.

To address this challenge, in this section, we introduce our first approach to fusing reward and dueling feedback in \drbandit, named {Elimination Fusion} (\elimfusion).
Although \elimfusion has a suboptimal regret in \drbandit, it suggests a simple and effective way to fuse absolute and relative feedback.
As the ultimate goal is to address the fusion in general online learning problems beyond bandits, the intuitive high-level idea of \elimfusion may have a broader application to other learning settings.


\subsection{Algorithm Design of \elimfusion}

Arm elimination is a common technique in multi-armed bandits~\citep{auer2010ucb} and dueling bandits~\citep{saha2022versatile}. The main idea is maintaining a candidate arm set \(\mathcal{C}\), initialized as the full arm set \(\mathcal{K}\), and as the learning proceeds, one gradually identifies and eliminates suboptimal arms from the set \(\mathcal{C}\) based on the observed feedback, until only the optimal arm remains.
\elimfusion leverages the maintained candidate arm set as a bridge to connect the two types of feedback.
Specifically, \elimfusion maintains a single candidate arm set \(\mathcal{C}\), and arms in this set can be eliminated according to \emph{either} reward \emph{or} dueling feedback, whichever is first triggered.

\elimfusion in Algorithm~\ref{alg:elimination-fusion} starts with a warm-up phase (detained in Algorithm~\ref{alg:initial-phase} in Appendix~\ref{app:deferred_pseudo_code}) to initialize the reward and dueling statistics, where each pair of arms is dueled once, and each arm is queried for rewards in a round-robin manner in the same time.
In each time slots, \elimfusion picks the arm and arm pair with smallest number of pulls and dueling times, respectively, which uniformly explores arms (Lines~\ref{line:uniform-explore-reward-arm-selection}--\ref{line:uniform-duel-observation}).
With the new reward and dueling observations, \elimfusion updates the reward and dueling statistics for each arm pair and arm, respectively (Line~\ref{line:elimfusion-statistics-update}, detailed in Algorithm~\ref{alg:statistics-update} in Appendix~\ref{app:deferred_pseudo_code}).
Then, \elimfusion eliminates arms based on the observed feedback (Line~\ref{line:dual-elimination}).
To illustrate the arm elimination, we define
the confidence radius
\(
\cridus_{k,t}^{\rew}
\coloneqq \sqrt{{2\log (Kt/\delta)}/{N_{k,t}}}
\) and \(
\cridus_{k,\ell,t}^{\duel}
\coloneqq \sqrt{{2\log (Kt/\delta)}/{M_{k,\ell,t}}}
\)
for the estimated reward mean and dueling probability, where \(\delta>0\) is the input confidence parameter of \elimfusion.
Then, \elimfusion eliminates an arm \(k\) if \emph{either} its upper confidence bound (UCB) of reward mean estimate \(\hat\mu_{k, t+1} + \cridus_{k,t+1}^{\rew}\) is less than the lower confidence bound (LCB) of highest mean estimate \(\hat\mu_{\hat k_{t + 1}^{\rew},t+1} - \cridus_{\hat k_{t + 1}^{\rew},t+1}^{\rew}\) (the \(\hat k_{t+1}^{\rew}\) is the estimated optimal arm in Line~\ref{line:elimfusion-empirical-optimal-arm}) \emph{or} there exists an arm \(\ell\) in \(\mathcal{C}\) such that the UCB of its dueling probability \(\hat{\nu}_{k,\ell,t+1} + \cridus_{k,\ell,t+1}^{\duel}\) is less than \({1}/{2}\) (i.e., arm \(\ell\) outperforms arm \(k\)).
\elimfusion keeps eliminating arms via the above loop, and when only the optimal arm remains in \(\mathcal{C}\), i.e., \(\abs{\mathcal{C}}=1\), it switches to exploitation  (Lines~\ref{line:elimfusion-exploit}--\ref{line:elimfusion-exploit-pull}).

\begin{algorithm}[H]
    \caption{Elimination Fusion (\elimfusion)}
    \label{alg:elimination-fusion}
    \begin{algorithmic}[1]
        \STATE \textbf{Input:} Arm set \(\mathcal K\), confidence parameter \(\delta\)
        \STATE \textbf{Initialize:} Candidate arm set \(\mathcal{C}\gets \mathcal{K}\), time slot \(t\gets 0\), reward pull counter \(N_{k,t}\gets 0\) and reward estimate \(\hat{\mu}_{k,t}\gets 0\) for all arms \(k\in\mathcal{K}\),
        dueling pull counter \(M_{k,\ell,t} \gets 0\) and dueling estimate \(\hat{\nu}_{k,\ell,t}\gets 0\) for all pairs \((k,\ell)\in\mathcal{K}^2\)
        \STATE Warm-up (Algorithm~\ref{alg:initial-phase})
        \FOR{each time slot \(t\)}
        \STATE \LINECOMMENT{\texttt{decision making:}}
        \IF[uniform explore]{\(\abs{\mathcal{C}} > 1\)} \label{line:uniform-explore-if-condition}
        \STATE \(k_t \gets \argmin_{k\in\mathcal{C}} N_{k,t}\)
        \label{line:uniform-explore-reward-arm-selection}
        \STATE \((k_{t,1}, k_{t,2}) \gets \argmin_{(k,\ell)\in\mathcal{C}^2, k\neq \ell} M_{k,\ell,t}\)

        \STATE Pull arm \(k_t\) and observe arm reward \(X_{k_t, t}\)
        \STATE Duel \((k_{t,1}, k_{t,2})\) and observe winner \(Y_{k_{t,1},k_{t,2},t}\) \label{line:uniform-duel-observation}

        \STATE Statistics update (Algorithm~\ref{alg:statistics-update}) \label{line:elimfusion-statistics-update}

        \STATE \LINECOMMENT{\texttt{arm elimination:}}

        \STATE \(\hat k_{t + 1}^{\rew} \gets \argmax_{\ell\in\mathcal{C}} \hat\mu_{\ell,t+1}\) \label{line:elimfusion-empirical-optimal-arm}

        \STATE\label{line:dual-elimination}\(\mathcal{C}\gets \mathcal{C} \setminus \bigg\{k\in\mathcal{C}:\text{ either }\quad \exists \ell\in \mathcal{C}{\setminus} \{k\},  \hat{\nu}_{k,\ell,t+1} {+} \cridus_{k,\ell,t+1}^{\duel}  < \frac{1}{2}\,\,\,\,\,\)
        \\\hfill \(\text{or } \hat\mu_{k, t+1} + \cridus_{k,t+1}^{\rew}  \le   \hat\mu_{\hat k_{t + 1}^{\rew},t+1} - \cridus_{\hat k_{t + 1}^{\rew},t+1}^{\rew}\bigg\}\)

        \ELSE[exploit when only the optimal arm left] \label{line:elimfusion-exploit}

        \STATE \(k\gets \) the only remaining arm in set \(\mathcal{C}\)
        \STATE Pull arm \(k\) for reward and duel pair \((k,k)\)
        \label{line:elimfusion-exploit-pull}
        \ENDIF

        \ENDFOR
    \end{algorithmic}
\end{algorithm}


\subsection{Regret Analysis of \elimfusion}

Theorem~\ref{thm:elimination-fusion} provides the regret upper bound of \elimfusion. Its proof is deferred to Appendix~\ref{app:proof-elimination-fusion}.

\begin{theorem}\label{thm:elimination-fusion}
    Letting \(\delta \gets 1/K^2T^2\) and taking the expectation, the regret upper bound of \emph{\elimfusion} can be upper bounded as follows,
    \begin{align}\label{eq:elimfusion-regret}
        \E[\regret] \le O\left( \sum_{k\neq 1} \frac{(\alpha\Delta_k^{\rew} + (1-\alpha)\Delta^{\duel}_k)\log T}{\max\{(\Delta_k^{\rew})^2, (\Delta^{\duel}_k)^2/K\}} \right).
    \end{align}
\end{theorem}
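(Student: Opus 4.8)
The plan is to run the standard clean-event analysis for elimination algorithms, but with the twist that the reward and dueling eliminations are coupled through the single shared candidate set \(\mathcal{C}\) and a shared round-robin schedule. First I would fix the clean event \(\mathcal{E}\) on which every estimate stays inside its confidence radius, i.e.\ \(\abs{\hat\mu_{k,t}-\mu_k}\le \cridus^{\rew}_{k,t}\) and \(\abs{\hat\nu_{k,\ell,t}-\nu_{k,\ell}}\le \cridus^{\duel}_{k,\ell,t}\) for all arms, pairs and rounds. With \(\delta=1/(K^2T^2)\), Hoeffding plus a union bound over the at most \(K\) arms, \(K^2\) pairs and \(T\) rounds makes \(\P(\mathcal{E}^c)\) polynomially small, so \(\mathcal{E}^c\) contributes only \(O(1)\) to \(\E[\regret]\). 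On \(\mathcal{E}\) I would then prove correctness: arm \(1\) has the largest mean so its reward UCB never falls below any competitor's LCB, and it is the Condorcet winner (\(\nu_{1,\ell}>1/2\)) so its dueling UCB never drops below \(1/2\); hence arm \(1\) is never eliminated, the algorithm eventually reaches \(\abs{\mathcal{C}}=1\) and exploits arm \(1\) at zero regret. All regret is thus incurred during exploration, and writing \(\rregret=\sum_{k\neq1}\Delta^{\rew}_k N_{k,T}\) and \(\dregret=\tfrac12\sum_{k\neq1}\Delta^{\duel}_k D_k\) with \(D_k:=\sum_{\ell\neq k}M_{k,\ell,T}\), the task reduces to bounding \(N_{k,T}\) and \(D_k\) for each suboptimal \(k\).

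Next I would establish the two elimination thresholds on \(\mathcal{E}\). The uniform exploration in Lines~\ref{line:uniform-explore-reward-arm-selection}--\ref{line:uniform-duel-observation} always picks the least-pulled arm and least-dueled pair, which keeps all reward counts (resp.\ dueling counts) inside \(\mathcal{C}\) balanced to within one; consequently all active confidence radii are comparable, and the reward test against the empirical best arm behaves, up to constants, like a test against arm \(1\). This yields \(n^{\rew}_k=\Theta\!\big(\log(KT/\delta)/(\Delta^{\rew}_k)^2\big)\) with \(N_{k,T}\le n^{\rew}_k\): once arm \(k\) is sampled this often, \(\hat\mu_{k}+\cridus^{\rew}_{k}\) drops below the best LCB. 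For dueling, using arm \(1\) as an always-available competitor and \(\nu_{k,1}=1/2-\Delta^{\duel}_k\), arm \(k\) is eliminated as soon as the pair \((k,1)\) is dueled \(n^{\duel}_k=\Theta\!\big(\log(KT/\delta)/(\Delta^{\duel}_k)^2\big)\) times, so \(M_{k,1,T}\le n^{\duel}_k\).

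The factor \(K\) then emerges from comparing the two round-robin schedules, which I expect to be the main obstacle. In each exploration round exactly one reward pull and one duel occur, but rewards are spread over \(\abs{\mathcal{C}}\le K\) active arms while duels are spread over \(\binom{\abs{\mathcal{C}}}{2}=\Theta(K^2)\) active pairs, of which only \(\abs{\mathcal{C}}-1=\Theta(K)\) involve arm \(k\). Tracking the per-round growth rates (arm \(k\)'s reward count grows like \(1/\abs{\mathcal{C}}\), the pair \((k,1)\)'s count like \(1/\binom{\abs{\mathcal{C}}}{2}\), and \(D_k\) like \(2/\abs{\mathcal{C}}\)) gives the two comparison inequalities \(N_{k,t}\le \tfrac{K-1}{2}M_{k,1,t}+O(1)\) and \(D_k\le 2N_{k,T}+O(1)\). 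Combining the first with the two thresholds yields \(N_{k,T}\le \min\{\,n^{\rew}_k,\ \tfrac{K}{2}n^{\duel}_k\,\}+O(1)=O\!\big(\log T/\max\{(\Delta^{\rew}_k)^2,(\Delta^{\duel}_k)^2/K\}\big)\), and the second gives \(D_k=O(N_{k,T})\) of the same order. Substituting into the decomposition produces \(\E[\regret]=O\big(\sum_{k\neq1}(\alpha\Delta^{\rew}_k+(1-\alpha)\Delta^{\duel}_k)\log T/\max\{(\Delta^{\rew}_k)^2,(\Delta^{\duel}_k)^2/K\}\big)\).

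The delicate point, where I would spend the most care, is making the two round-robin comparison inequalities rigorous while \(\mathcal{C}\) shrinks: the per-round rates depend on the time-varying \(\abs{\mathcal{C}_t}\), arms leave at different times (worse arms, with larger gaps and smaller thresholds, first), and one must bound the accumulated ratio using only \(\abs{\mathcal{C}_t}\le K\) without letting the ``within one'' balancing errors compound across eliminations. Everything else is routine: the \(O(K^2)\) warm-up rounds contribute only a \(T\)-independent constant, and the \(\mathcal{E}^c\) term is \(O(1)\), both absorbed into the stated big-\(O\).
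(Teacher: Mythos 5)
Your proposal is correct and follows essentially the same route as the paper's proof: a clean event via Hoeffding and a union bound with \(\delta = 1/K^2T^2\), per-feedback elimination thresholds \(n_k^{\rew} = \Theta(\log(KT/\delta)/(\Delta_k^{\rew})^2)\) and \(n_k^{\duel} = \Theta(\log(KT/\delta)/(\Delta_k^{\duel})^2)\) (the paper likewise uses the better/optimal arm as the dueling competitor), and the rate-coupling between the two round-robin schedules yielding exactly the paper's factor \(\tfrac{K-1}{2}\) (ratio of reward-pull to pair-duel rates) and the factor \(2\) (duels involving \(k\) per reward pull of \(k\)). Your packaging via the uniform count inequalities \(N_{k,t}\le\tfrac{K-1}{2}M_{k,1,t}+O(1)\) and \(D_k\le 2N_{k,T}+O(1)\) is equivalent to the paper's case analysis on which feedback triggers elimination, and the shrinking-\(|\mathcal{C}_t|\) subtlety you flag is one the paper itself treats only informally, so there is no gap relative to the published argument.
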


Applying elimination algorithms to reward and dueling feedback individually without sharing the candidate arm set yields the regret upper bound as follows, \begin{align}
    \E[\regret] \le O\left( \sum_{k\neq 1} \frac{(\alpha\Delta_k^{\rew} + (1-\alpha)\Delta^{\duel}_k)\log T}{\min\{(\Delta_k^{\rew})^2, (\Delta^{\duel}_k)^2/K\}} \right),
\end{align}
where the denominator is the minimum of the two gap-dependent terms and can be much worse than the maximum in the denominator in~\eqref{eq:elimfusion-regret} of \elimfusion.
This indicates the improvement of fusing the reward and dueling feedback via sharing the same candidate arm set in \elimfusion.

While the fusion via the candidate arm set in the elimination algorithm is effective and easy to implement, the \(1 / K\) factor in
the \((\Delta_k^{\duel})^2/K\) term of~\eqref{eq:elimfusion-regret} is \emph{suboptimal}, which is inherited from the suboptimality of the elimination algorithm in the dueling bandits literature~\citep[Remark~1]{saha2022versatile}.
Furthermore, the dependence on the parameter \(\alpha\) in~\eqref{eq:elimfusion-regret}, compared to the lower bound in~\eqref{eq:simplified-lower-bound}, is also suboptimal, reflecting the limitation of the elimination algorithm design for more sophisticated optimization in \drbandit.
In the next section, we introduce an advanced algorithm to address these limitations and achieve the optimal regret balance and dependence on the problem parameters.


\section{\defusion: Fusing Reward and Dueling via Suboptimal Arm Decomposition}\label{sec:defusion}

This section presents a novel approach to fusing the reward and dueling feedback, called decomposition fusion (\defusion).
We first present the high-level ideas and technical challenges of \defusion in Section~\ref{subsec:defusion-high-level-idea}, and then provide its detailed algorithm design in Section~\ref{subsec:defusion-algorithm-detail}.
Finally, we present the theoretical regret upper bound of \defusion in Section~\ref{subsec:defusion-theoretical-analysis}.

\subsection{High-Level Ideas and Technical Challenges}\label{subsec:defusion-high-level-idea}

\textbf{Decomposed exploration for two set of suboptimal arms.} We denote \(\mathcal{K}^{\rew} \coloneqq \{k\in \mathcal{K}:
{\alpha\Delta_k^{\rew}}/{\kl(\mu_k, \mu_1)} <  \min_{\ell < k} {(1-\alpha) (\Delta^{\duel}_k + \Delta^{\duel}_{\ell})}/{\kl\left( \nu_{k,\ell}, \frac 1 2 \right)}
\}\) as the subset of suboptimal arms incurring smaller regret from reward feedback, and \(\mathcal{K}^{\duel} = \mathcal{K}\setminus\{\mathcal{K}^{\rew} \cup \{1\}\}\) as that of dueling feedback. Then, the RHS of regret lower bound in~\eqref{eq:regret-lower-bound} can be decomposed as
\(\sum_{k\in \mathcal{K}^{\rew}}  {\alpha\Delta_k^{\rew}}/{\kl(\mu_k, \mu_1)} + \sum_{k\in \mathcal{K}^{\duel}}
\min_{\ell < k} {(1-\alpha) (\Delta^{\duel}_k + \Delta^{\duel}_{\ell})}/{\kl\left( \nu_{k,\ell}, \frac 1 2 \right)}.\)
This lower bound decomposition implies that to minimize regret, arms in \(\mathcal{K}^{\rew}\) and \(\mathcal{K}^{\duel}\) should be explored by reward and dueling feedback, respectively.
Inspired by this observation,
an algorithm with this minimal regret also needs to decompose arms as above and explore them accordingly, which we call that the algorithm has a \emph{decomposed exploration policy}.
However, the above decomposition relies on the reward and dueling gaps \(\Delta_k^{\rew}\) and \(\Delta_k^{\duel}\), which are unknown a priori. This poses a significant challenge for designing a near-optimal algorithm.

Before delving into the details of our algorithm design, we introduce \emph{``empirical log-likelihoods''} as the measures of the amount of information collected for distinguishing arm \(k\) up to time slot \(t\) from the reward and dueling feedback,
\(
I_{k,t}^{\rew}
\coloneqq N_{k,t} \kl\left(\hat{\mu}_{k,t}, \max_{\ell\in \mathcal{K}}\hat{\mu}_{\ell,t}\right),
I_{k,t}^{\duel}
\coloneqq \sum_{\ell \in \mathcal{K}: \hat\nu_{k, \ell,t} < \frac{1}{2}} M_{k, \ell, t} \kl\left(\hat{\nu}_{k, \ell, t}, \frac{1}{2}\right),
\)
introduced by~\citet{honda2010asymptotically} and~\citet{komiyama2015regret} for devising optimal algorithms in stochastic and dueling bandits, respectively.
Their algorithms use the conditions of \( I_{k,t}^{\rew} \le \log t + f(K)\) and \(  I_{k,t}^{\duel} - \min_{\ell\in\mathcal{K}} I_{\ell,t}^{\duel} \le \log t + f(K)\) for choosing arms to explore, and the function \(f(K)\) is independent of time horizon \(T\) and determined later.

\textbf{Decomposition deadlock.} The information measures \( I_{k,t}^{\rew}\) and \( I_{k,t}^{\duel}\) cannot work with the decomposed exploration policy.
Because both definitions involve the full arm set \(\mathcal{K}\) and thus need accuracy estimates of all arms.
But under the decomposed arm exploration policy,
arms in \({\mathcal{K}}^{\rew}\) only have good estimates of reward mean \(\hat \mu_{k,t}\), while arms in \(\mathcal{K}^{\duel}\) only have good estimates of dueling probability \(\hat \nu_{k,\ell,t}\), as exploring these arms by feedback other than the corresponding one would incur redundant regrets. This is a ``decomposition deadlock'': applying the decomposed exploration policy makes the \(I_{k,t}^{\rew}\) and \(I_{k,t}^{\duel}\)-based optimal bandit algorithms design invalid, and deploying the bandit algorithm design makes the decomposed exploration policy invalid.
Similar ``deadlock'' challenges also exist when considering other bandit algorithms, e.g., KL-UCB~\citep{cappe2013kullback}.

\subsection{Algorithm Design of \defusion} \label{subsec:defusion-algorithm-detail}

To address the above challenges, we propose the decomposition fusion (\defusion) algorithm.
\defusion, presented in Algorithm~\ref{alg:defusion}, consists of three main components:
(i) decomposition arm set construction (Lines~\ref{line:defusion-decomposition-arm-set-update}--\ref{line:defusion-decomposition-arm-set-update-end}),
(ii) randomized decision making (Lines~\ref{line:defusion-randomized-decision-making}--\ref{line:defusion-randomized-decision-making-end}), and
(iii) exploration arm set update (Lines~\ref{line:defusion-exploration-arm-set-update}--\ref{line:defusion-exploration-arm-set-update-end}).
Specifically, for the challenge of unknown decomposition sets,
we conservatively maintain two sets \(\hat{\mathcal{K}}_t^{\rew}\) and \(\hat{\mathcal{K}}_t^{\duel}\) in terms of the collected information instead of the incurred regret (details in Section~\ref{subsubsec:decomposition-arm-set-construction}).
However, this set construction leads to a mismatch between the constructed and ground-truth decompositions, which our proposed randomized decision-making strategy in Section~\ref{subsubsec:randomized-decision-making}  can address.
Finally, for the ``deadlock'' challenge of the conflict between the decomposed exploration policy and the calculation of information measures \(I_{k,t}^{\rew}\) and \(I_{k,\ell,t}^{\duel}\), in Section~\ref{subsubsec:exploration-arm-set-construction}, we devise a novel approach to update the exploration arm set \(\mathcal{E}\) based on the revised definitions of the information measures \(\hat I_{k,t}^{\rew}\) and \(\hat I_{k,t}^{\duel}\) and the approximated arm sets \(\hat{\mathcal{K}}_t^{\rew}\) and \(\hat{\mathcal{K}}_t^{\duel}\).

\begin{algorithm}[H]
    \caption{\defusion: Decomposition Fusion}
    \label{alg:defusion}
    \begin{algorithmic}[1]
        \STATE \textbf{Input:} Arm set \(\mathcal K\), parameter \(\alpha\),
        function \(f(K)\)
        \STATE \textbf{Initialize:} Exploration arm set \(\mathcal{E}\gets \mathcal{K}\), time slot \(t\gets 0\), decomposition arm set \(\hat{\mathcal{K}}_t^{\rew}, \hat{\mathcal{K}}_t^{\duel} \gets \mathcal{K}\), \(\hat I_{k,t}^{\rew}, \hat I_{k,t}^{\duel} \gets 0\) for all arms \(k\), and other initializations in Algorithm~\ref{alg:elimination-fusion}

        \STATE Warm-up (Algorithm~\ref{alg:initial-phase}) \label{line:defusion-warm-up}

        \FOR{each time slot \(t\)}

        \STATE \LINECOMMENT{\texttt{decomposition arm set update:}}\label{line:defusion-decomposition-arm-set-update}

        \STATE \(\hat{\mathcal{K}}_{t}^{\duel} \gets \{k\in \mathcal{K}: \hat I_{k,t}^{\rew} \le \log t + f(K)\}\)
        \label{line:defusion-dueling-set-construction}

        \STATE \(\hat{k}_{t}^{\duel} \gets \argmin_{k\in \hat{\mathcal{K}}_{t}^{\duel}}\hat I_{k,t}^{\duel}\)
        \label{line:defusion-dueling-empirical-optimal-arm}

        \STATE \(\hat{\mathcal{K}}_{t}^{\rew} \gets \big\{k\in \mathcal{K}: \hat I_{k,t}^{\duel} - \hat I_{\hat{k}_{t}^{\duel}, t}^{\duel} \le \log t + f(K)\big\}\)
        \label{line:defusion-reward-set-construction}

        \STATE \(\hat{k}_{t}^{\rew} \gets \argmax_{k\in \hat{\mathcal{K}}_{t}^{\rew}} \hat\mu_{k,t}\)
        \label{line:defusion-reward-empirical-optimal-arm}
        \label{line:defusion-decomposition-arm-set-update-end}

        \STATE \LINECOMMENT{\texttt{randomized decision making:}} \label{line:defusion-randomized-decision-making}
        \STATE Pick an arm \(\exparm\) from \(\mathcal{E}\) according to a fixed order
        \label{line:defusion-pick-exp-arm}

        \IF{\(\text{Uniform}[0,1] > \frac{\alpha^2}{\alpha^2 + (1-\alpha)^2}\)} \label{line:defusion-random-decision-choice}
        \STATE \COMMENT{reward explore, dueling exploit}

        \STATE \(k_t \gets \exparm\) and \((k_{1,t}, k_{2,t}) \gets (\hat{k}_t^{\rew}, \hat{k}_t^{\rew})\)

        \ELSE[dueling explore, reward exploit]

        \STATE \(\hat{\mathcal{O}}_{\exparm, t} {\gets} \{k\in\hat{\mathcal{K}}_t^{\duel}{\setminus} \{\exparm\}: \hat\nu_{\exparm, k, t} {\le} \frac{1}{2}\}\) \label{line:defusion-dueling-explore-set contruction}

        \IF{\(\hat k^{\duel}_t \in \hat{\mathcal{O}}_{\exparm, t}\) or \(\hat{\mathcal{O}}_{\exparm, t} = \emptyset\)} \label{line:defusion-dueling-comparison-arm-choose}
        \STATE \(k_t^{\text{duel}} \gets \hat k^{\duel}_t\)

        \ENDIF
        \STATE \textbf{else} \(\quad k_t^{\text{duel}} \gets \argmin_{k\in\hat{\mathcal{K}}_t^{\duel}\setminus \{\exparm\}} \hat\nu_{\exparm, k, t}\)
        \label{line:defusion-dueling-comparison-arm-choose-end}

        \STATE \(k_t \gets \hat{k}_t^{\duel}\) and \((k_{1,t}, k_{2,t}) \gets (\exparm, k_t^{\text{duel}})\)

        \ENDIF
        \STATE Pull arm \(k_t\) and observe arm reward \(X_{k_t, t}\)
        \STATE Duel \((k_{t,1}, k_{t,2})\) and observe winner \(Y_{k_{t,1},k_{t,2},t}\)
        \label{line:defusion-randomized-decision-making-end}

        \STATE \LINECOMMENT{\texttt{exploration arm set update:}}\label{line:defusion-exploration-arm-set-update}

        \STATE \(\mathcal{E}\gets \mathcal{E}\setminus \{\exparm\}\) \COMMENT{remove the explored arm}\label{line:defusion-remove-exp-arm}

        \STATE \(\mathcal{B} \gets \mathcal{B}
        \cup \{k\in \hat{\mathcal{K}}_{t}^{\rew}\setminus \mathcal{E}:\hat I_{k,t}^{\rew} \le \log t + f(K)\}
        \cup \{k\in \hat{\mathcal{K}}_{t}^{\duel}\setminus \mathcal{E}: \hat I_{k,t}^{\duel} - \hat I_{\hat{k}_{t}^{\duel},t}^{\duel} \le \log t + f(K)\}\)
        \label{line:defusion-add-new-arm}

        \IF[last \(\mathcal{E}\) was traversed]{\(\mathcal{E}\) is empty}

        \STATE \(\mathcal{E}\gets \mathcal{B}\) and \(\mathcal{B}\gets \emptyset\) \COMMENT{renew \(\mathcal{E}\)}
        \label{line:defusion-renew-exploration-arm-set}
        \ENDIF\label{line:defusion-exploration-arm-set-update-end}

        \STATE Statistics update (Algorithm~\ref{alg:statistics-update}) \label{line:defusion-statistics-update}

        \STATE \(\hat I_{k,t+1}^{\rew}
        \gets N_{k,t} \kl(\hat{\mu}_{k,t}, \max_{\ell\in \hat{\mathcal{K}}_{t}^{\rew}}\hat{\mu}_{\ell,t})\) \label{line:defusion-reward-information-update}

        \STATE \(\hat I_{k,t+1}^{\duel}
        \gets \sum_{\ell \in \hat{\mathcal{K}}_{t}^{\duel}: \hat\nu_{k, \ell,t} < \frac{1}{2}} M_{k, \ell, t} \kl(\hat{\nu}_{k, \ell, t}, \frac{1}{2})\)
        \label{line:defusion-dueling-information-update}
        \ENDFOR
    \end{algorithmic}
\end{algorithm}

\subsubsection{Decomposition Arm Set Construction}\label{subsubsec:decomposition-arm-set-construction}

After the warm-up (Line~\ref{line:defusion-warm-up}), in each time slot, \defusion constructs two arm sets, \(\hat{\mathcal{K}}_t^{\rew}\) and \(\hat{\mathcal{K}}_t^{\duel}\), as analogs (\emph{not} estimates) of the sets \(\mathcal{K}^{\rew}\) and \(\mathcal{K}^{\duel}\).
The reason that \(\hat{\mathcal{K}}_t^{\rew}\) and \(\hat{\mathcal{K}}_t^{\duel}\) are only analogs, not estimates, is because that
the lack of knowledge of reward and dueling gaps \(\Delta_{k}^{\rew}\) and \(\Delta_k^{\duel}\) makes it hard to decompose arms according to the weighted reward and dueling-based regrets.
Instead, we utilize the revised information measures \(\hat I_{k,t}^{\rew}\) and \(\hat I_{k,t}^{\duel}\) (defined in Section~\ref{subsubsec:exploration-arm-set-construction}) to construct the arm sets.
Because this information-based set construction does not consider the weight \(\alpha\), the constructed sets \(\hat{\mathcal{K}}_t^{\rew}\) and \(\hat{\mathcal{K}}_t^{\duel}\) mismatches the optimal decomposition sets \(\mathcal{K}^{\rew}\) and \(\mathcal{K}^{\duel}\), even when the information measures are accurate at the end of the time horizon; thus only analogs.
We also note that while the optimal decomposition \(\mathcal{K}^{\rew}\) and \(\mathcal{K}^{\duel}\),
together with the singleton of optimal arm \(\{1\}\), forms an exclusive partition of the arm set \(\mathcal{K}\),
the conservatively constructed sets \(\hat{\mathcal{K}}_t^{\rew}\) and \(\hat{\mathcal{K}}_t^{\duel}\)
often overlap, and their intersection \(\hat{\mathcal{K}}_t^{\rew}\cap \hat{\mathcal{K}}_t^{\duel}\) contains the optimal arm \(1\) in almost all time slots.

Specifically, \(
\hat{\mathcal{K}}_{t}^{\duel} \coloneqq \{k\in \mathcal{K}: \hat I_{k,t}^{\rew} \le \log t + f(K)\}
\) is defined in Line~\ref{line:defusion-dueling-set-construction},
where the condition \(\hat I_{k,t}^{\rew} \le \log t + f(K)\) implies the insufficient information on arm \(k\) from reward feedback.
This is a conservative approximation of the dueling arm set \(\mathcal{K}^{\duel}\):
as long as there is insufficient information from reward feedback to exclude an arm from exploration, that arm is included in dueling feedback's corresponding set.
The other set \(\hat{\mathcal{K}}_t^{\rew}\) is constructed similarly, as detailed in Line~\ref{line:defusion-reward-set-construction}.
Especially, we calculate the estimated optimal arms \(\hat k_t^{\duel}\) and \(\hat k_t^{\rew}\) among the approximated dueling and reward arm sets, respectively, in Lines~\ref{line:defusion-dueling-empirical-optimal-arm} and~\ref{line:defusion-reward-empirical-optimal-arm}.
Because arms outside the corresponding sets may not have good estimate accuracy and thus are not considered.

\subsubsection{Randomized Decision Making}\label{subsubsec:randomized-decision-making}

To tackle the mismatch between the constructed \((\hat{\mathcal{K}}_t^{\rew}, \hat{\mathcal{K}}_t^{\duel})\) and the ground truth \((\mathcal{K}^{\rew}, \mathcal{K}^{\duel})\)
and realize the decomposed exploration policy,
we devise a randomized decision-making strategy.
In each time slot, \defusion picks one arm \(\exparm\) from the exploration arm set \(\mathcal{E}\) according to some fixed order in Line~\ref{line:defusion-pick-exp-arm}, e.g., the arm with the smallest index in the set.
Then, \defusion randomly chooses either the reward or dueling feedback to explore the arm \(\exparm\) in Line~\ref{line:defusion-random-decision-choice}.
Specifically,
if the round's realization of the uniform distribution \(\text{Uniform}[0,1]\) is greater than a threshold \(\frac{\alpha^2}{\alpha^2 + (1-\alpha)^2}\),
then the algorithm explores the arm \(\exparm\) by the reward feedback and exploits the pair of the reward estimated optimal arm \((\hat k_t^{\rew}, \hat k_t^{\rew})\) by the dueling feedback;
otherwise, the algorithm explores the arm \(\exparm\) by the dueling feedback and exploits the dueling estimated optimal arm \(\hat k_t^{\duel}\) by the reward feedback.
When exploring \(\exparm\) via dueling feedback  ,
we follow the RMED1 algorithm of \citet{komiyama2015regret} for dueling bandits to pick the comparison arm \(k_t^{\text{duel}}\) (Lines~\ref{line:defusion-dueling-comparison-arm-choose}--\ref{line:defusion-dueling-comparison-arm-choose-end}).
One key difference is that our comparison arm \(k_t^{\text{duel}}\) is selected from the set \(\hat{\mathcal{K}}_t^{\duel}\) instead of the full arm set \(\mathcal{K}\).


Finally, the threshold \(\frac{\alpha^2}{\alpha^2 + (1-\alpha)^2}\) is chosen to compensate for the mismatch in the decomposition arm set construction.
Although chosen by careful derivations,
the threshold has an intuitive explanation: take individual squares on all terms in \(\frac{\alpha}{\alpha + (1-\alpha)} = \alpha\).
The square exponent can be regarded as the compensation for the mismatch between the optimal set \(\mathcal{K}^{\rew}\) (omit \(\mathcal{K}^{\duel}\)) according to the regret with a linear dependence on \(1 / \Delta_k^{\rew}\) and the constructed set \(\hat{\mathcal{K}}_t^{\rew}\) via the collected information \(\hat I_{k,t}^{\rew}\) with quadratic \((1 / \Delta_k^{\rew})^2\) dependence.

\subsubsection{Exploration Arm Set Construction}\label{subsubsec:exploration-arm-set-construction}

Besides sets \(\hat{\mathcal{K}}_t^{\rew}\) and \(\hat{\mathcal{K}}_t^{\duel}\), \defusion also maintains an exploration arm set \(\mathcal{E}\) and an auxiliary arm set \(\mathcal{B}\) for updating \(\mathcal{E}\).
In each time slot, the exploration arm set \(\mathcal{E}\) outputs an arm to explore and removes it after exploration (Line~\ref{line:defusion-remove-exp-arm}),
and the auxiliary arm set \(\mathcal{B}\) adds arms that need further exploration regarding either the reward or dueling feedback, as detailed in Line~\ref{line:defusion-add-new-arm}, and each arm in the set \(\mathcal{B}\) is unique and not duplicated.
When all arms in \(\mathcal{E}\) are explored, i.e., \(\mathcal{E} = \emptyset\), the algorithm renews the exploration arm set \(\mathcal{E}\) by the auxiliary arm set \(\mathcal{B}\) (Line~\ref{line:defusion-renew-exploration-arm-set}).

At the end of a time slot, \defusion updates the statistics via Algorithm~\ref{alg:statistics-update} (Line~\ref{line:defusion-statistics-update}) as well as the revised empirical log-likelihoods \(\hat I_{k,t}^{\rew}\) and \(\hat I_{k,t}^{\duel}\) in Lines~\ref{line:defusion-reward-information-update} and~\ref{line:defusion-dueling-information-update}.
The key difference between the revised empirical log-likelihoods \(\hat I_{k,t}^{\rew}\) and \(\hat I_{k,t}^{\duel}\) and the original ones \(I_{k,t}^{\rew}\) and \(I_{k,t}^{\duel}\) is that the revised ones only involve the arms in the approximated sets \(\hat{\mathcal{K}}_t^{\rew}\) and \(\hat{\mathcal{K}}_t^{\duel}\) instead of the full arm set \(\mathcal{K}\).

\subsection{Regret Analysis of \defusion}\label{subsec:defusion-theoretical-analysis}

\begin{theorem}[Regret upper bound of Algorithm~\ref{alg:defusion}]\label{thm:defusion-regret}
    For sufficiently small \(\xi > 0\), and a constant \(c>0\) depending on the bandit instance,
    \defusion has the regret bound,
    \begin{align}
         & \E[R_T] \le O(K^2) + O(\epsilon^{-2}) + O(e^{cK - f(K)}) +
        \\\label{eq:defusion-regret}
         & \sum_{k\neq 1}  \frac{(\Delta_k^{\rew}/\alpha + \Delta_k^{\duel}/(1{-}\alpha))((1{+}\epsilon)\log T + f(K))}{\max\left\{\kl(\mu_k, \mu_1) / \alpha^2, \kl(\nu_{k,1}, \frac{1}{2}) / (1 {-} \alpha)^2\right\}}.
    \end{align}
    Simplifying the above bound by letting \(T\to \infty\), \(f(K) = cK^{1+\xi}\), and noticing \(\kl(p,q)=\Theta((p-q)^2)\), we have \begin{equation}\label{eq:defusion-simplified-regret}
        \begin{split}
            \E[R_T] \le O\bigg(
            \sum_{k\neq 1}\frac{\log T}{\max\{\Delta_k^{\rew} / \alpha, \Delta_k^{\duel} / (1 - \alpha)\}}
            \bigg).
        \end{split}
    \end{equation}
\end{theorem}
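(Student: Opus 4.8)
The plan is to split $\E[\regret]$ into an \emph{exploitation} contribution and an \emph{exploration} contribution, and to show that only the latter carries the leading $\log T$ term. First I would condition on a good event on which all empirical quantities concentrate: for every arm $k$ and every sufficiently large $t$ the empirical means $\hat\mu_{k,t},\hat\nu_{k,\ell,t}$ are within the confidence radii of their truths, so that $\hat I_{k,t}^{\rew}=(1\pm\epsilon)N_{k,t}\kl(\mu_k,\mu_1)$ and $\hat I_{k,t}^{\duel}=(1\pm\epsilon)\sum_{\ell}M_{k,\ell,t}\kl(\nu_{k,\ell},\tfrac12)$, and the estimated optima $\hat k_t^{\rew},\hat k_t^{\duel}$ both equal the true optimal arm $1$. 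Standard Bernoulli-KL large-deviation bounds (as in \citet{honda2010asymptotically} for the reward channel and \citet{komiyama2015regret} for the dueling channel) make the complement summable; the slack $f(K)$ is exactly what tames the rare rounds where some $\hat I$ severely underestimates the truth, and on those rounds each mis-exploitation costs only a constant. These bad events contribute the residual $O(K^2)+O(\epsilon^{-2})+O(e^{cK-f(K)})$ terms of \eqref{eq:defusion-regret}.

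On the good event the regret is driven by the number $S_k$ of times each suboptimal arm $k$ is chosen as the exploration arm $\exparm$. Since the coin of Line~\ref{line:defusion-random-decision-choice} is independent across rounds, a Binomial concentration gives $N_{k,T}\approx p_R S_k$ and $M_{k,1,T}\approx p_D S_k$, where $p_R=\tfrac{(1-\alpha)^2}{\alpha^2+(1-\alpha)^2}$ and $p_D=\tfrac{\alpha^2}{\alpha^2+(1-\alpha)^2}$ are the reward- and dueling-exploration probabilities. The key structural fact—obtained by unfolding the membership conditions in Lines~\ref{line:defusion-dueling-set-construction} and~\ref{line:defusion-reward-set-construction} together with the re-insertion rule of Line~\ref{line:defusion-add-new-arm}—is that arm $k$ leaves the exploration pool as soon as \emph{either} $\hat I_{k,t}^{\rew}>\log t+f(K)$ \emph{or} $\hat I_{k,t}^{\duel}-\hat I_{\hat k_t^{\duel},t}^{\duel}>\log t+f(K)$. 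Substituting the concentration estimates (and using the assumption $\ell_k^*=1$, so the effective competitor is arm $1$), this stopping rule yields
\[
S_k \lesssim \min\Big\{\frac{(1+\epsilon)\log T+f(K)}{p_R\,\kl(\mu_k,\mu_1)},\ \frac{(1+\epsilon)\log T+f(K)}{p_D\,\kl(\nu_{k,1},\tfrac12)}\Big\}+O(1).
\]

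Next I would convert $S_k$ into weighted regret. Each reward-exploration of $k$ pulls $k_t=k$ and duels the good pair $(\hat k_t^{\rew},\hat k_t^{\rew})$, costing $\approx\alpha\Delta_k^{\rew}$; each dueling-exploration duels $(k,k_t^{\duel})$ with $k_t^{\duel}=1$ and pulls the good arm $\hat k_t^{\duel}$, costing $\approx(1-\alpha)\Delta_k^{\duel}$ up to a constant. Hence arm $k$'s contribution is $\approx S_k\big(p_R\alpha\Delta_k^{\rew}+p_D(1-\alpha)\Delta_k^{\duel}\big)$. Plugging in the two candidate values of $S_k$ and the explicit $p_R,p_D$ shows that in the regime $\kl(\mu_k,\mu_1)/\alpha^2>\kl(\nu_{k,1},\tfrac12)/(1-\alpha)^2$ the contribution equals $\tfrac{(1+\epsilon)\log T+f(K)}{\kl(\mu_k,\mu_1)}\big(\alpha\Delta_k^{\rew}+\tfrac{\alpha^2}{1-\alpha}\Delta_k^{\duel}\big)$, which is exactly the $k$-th summand of \eqref{eq:defusion-regret} with the maximum attained by the reward term; the symmetric regime gives the dueling term. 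This is precisely why the threshold is squared: the information stopping times scale like $(1/\Delta_k)^2$ while the per-round regret scales like $\Delta_k$, and the choice $p_R:p_D=(1-\alpha)^2:\alpha^2$ balances each stopping time against its own cost. The simplification \eqref{eq:defusion-simplified-regret} then follows from $\kl(p,q)=\Theta((p-q)^2)$ and $f(K)=cK^{1+\xi}=o(\log T)$ as $T\to\infty$.

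The hard part will be rigorously decoupling the two channels under the \emph{decomposition deadlock}: the sets $\hat{\mathcal K}_t^{\rew},\hat{\mathcal K}_t^{\duel}$ are built from the very log-likelihoods whose concentration presupposes adequate sampling, yet the decomposed policy deliberately samples each arm through a single channel. The delicate point is to show that an arm whose reward information has already crossed the threshold (so it has left $\hat{\mathcal K}_t^{\duel}$ and is no longer dueled) does not corrupt the dueling log-likelihoods of the arms that remain; this is exactly why $\hat I^{\duel}$ and the comparison arm $k_t^{\duel}$ are restricted to the current set $\hat{\mathcal K}_t^{\duel}$ (Lines~\ref{line:defusion-dueling-comparison-arm-choose}--\ref{line:defusion-dueling-comparison-arm-choose-end} and~\ref{line:defusion-dueling-information-update}). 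One must also bound the mismatch between the analog sets and the true $\mathcal K^{\rew},\mathcal K^{\duel}$, which is absorbed by the coin randomization and the additive $f(K)$ slack, and carefully charge the regret of a suboptimal dueling \emph{comparison} arm $k_t^{\duel}$ via the RMED1 accounting of \citet{komiyama2015regret}. Handling these couplings simultaneously, rather than the single-channel bookkeeping, is the main technical burden.
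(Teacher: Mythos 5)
Your proposal follows essentially the same route as the paper's proof: the same good/bad-event decomposition (with the bad-event rounds controlled via the large-deviation lemmas of \citet{honda2010asymptotically} and \citet{komiyama2015regret}, yielding the $O(K^2)+O(\epsilon^{-2})+O(e^{cK-f(K)})$ residuals), the same observation that the re-insertion rule makes an arm exit exploration once \emph{either} information measure crosses $\log t + f(K)$, and the same coin-splitting computation giving the sufficient exploration count $L_k^{\suff}=\min\bigl\{N_k^{\suff}/p_R,\,M_{k,1}^{\suff}/p_D\bigr\}$ followed by the identical threshold algebra for $\beta=\frac{\alpha^2}{\alpha^2+(1-\alpha)^2}$. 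The only cosmetic differences are that the paper bounds $\E[N_{k,T}]$ and $\E[M_{k,1,T}]$ directly in expectation rather than via Binomial concentration on $S_k$, and that it needs no assumption $\ell_k^*=1$ for the upper bound, since on the good event the comparison arm is arm $1$ automatically (that assumption matters only for matching the lower bound).
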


\begin{figure*}[tb]
    \centering
    \subfigure[Aggregated regret \(\regret\)]{
        \includegraphics[width=0.465\columnwidth]{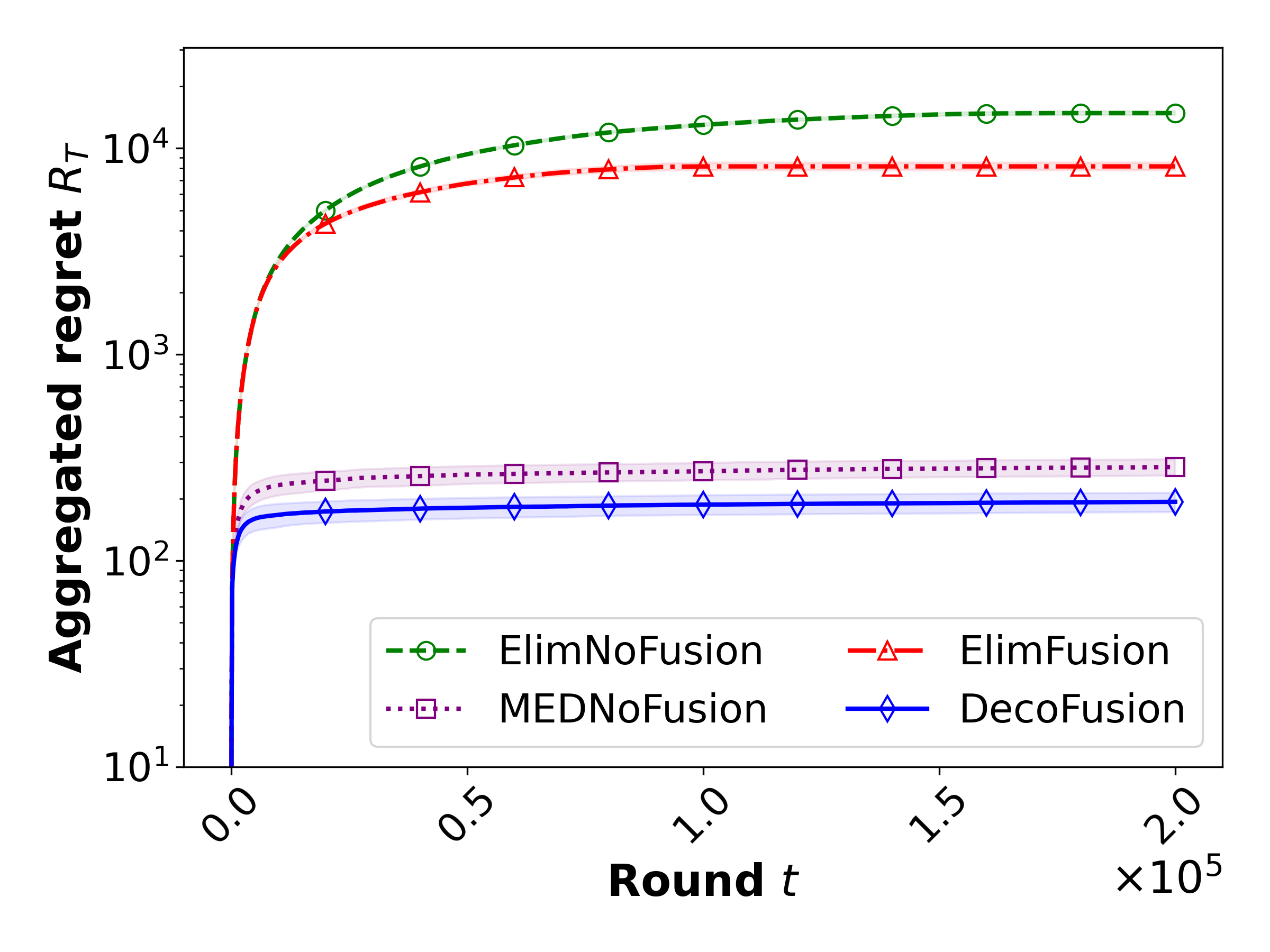}
        \label{fig:final-regret}
    }
    \hfill
    \subfigure[Vary reward gap \(\Delta_2^{\rew}\)]
    {
        \includegraphics[width=0.465\columnwidth]{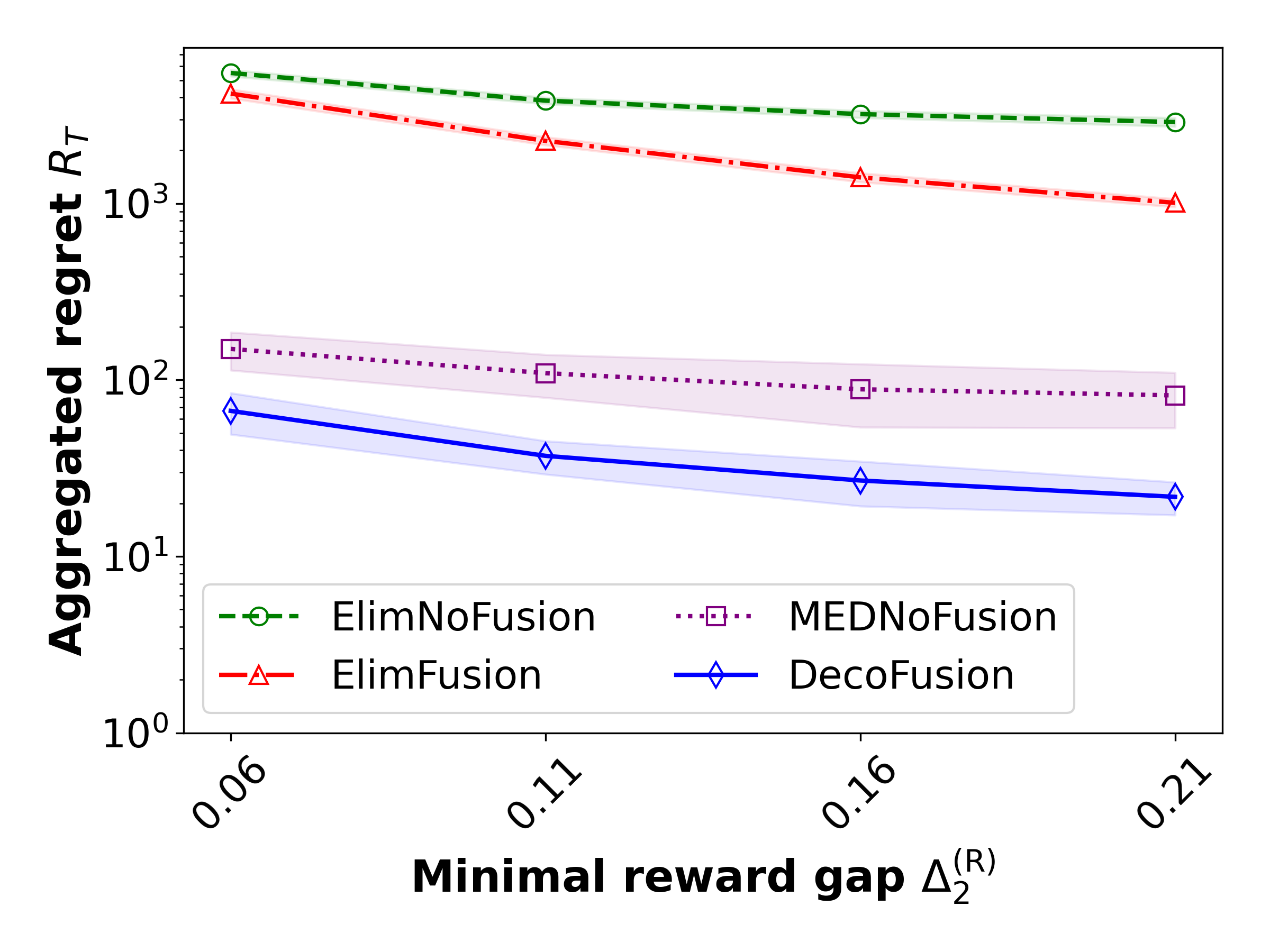}
        \label{fig:reward-gap}
    }
    \hfill
    \subfigure[Vary dueling gap \(\Delta_2^{\duel}\)]{
    \includegraphics[width=0.465\columnwidth]{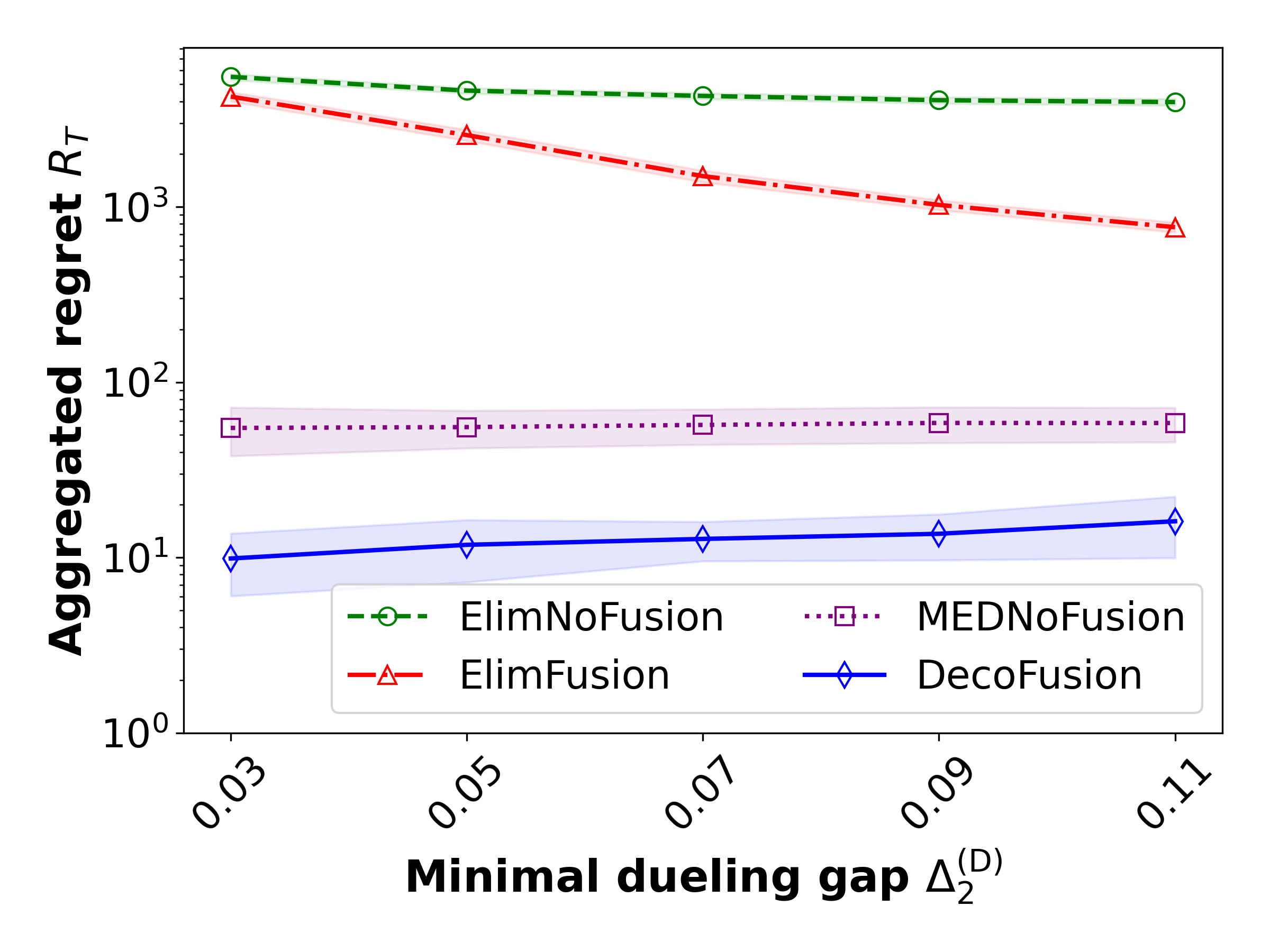}
        \label{fig:dueling-gap}
    }
    \hfill
    \subfigure[Vary \(\alpha\) in \defusion]{
        \includegraphics[width=0.48\columnwidth]{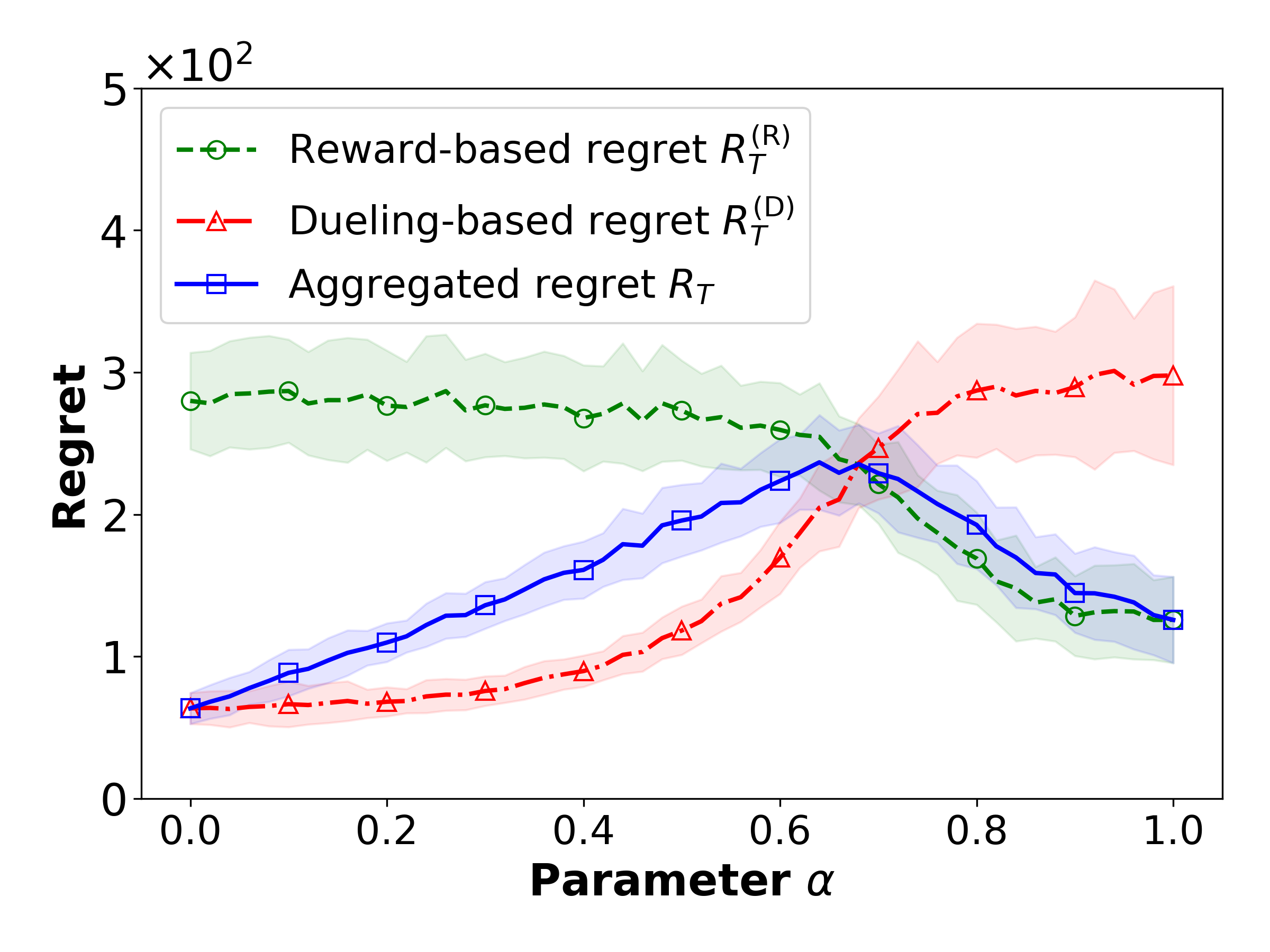}
        \label{fig:alpha-defusion}
    }
    \vspace{-0.1in}
    \caption{Regret comparison in different settings}
    \label{fig:experiments}
    \vspace{-0.2in}
\end{figure*}

The proof of Theorem~\ref{thm:defusion-regret} consists of two key claims (steps 2 and 3 of the proof in Appendix~\ref{app:proof-defusion-regret-bound}):
(i) in most time slots, the estimated optimal arms \(\hat k_t^{\rew}\) and \(\hat k_t^{\duel}\) from both feedback types are exactly the optimal arm \(1\), i.e., \(\sum_{t=1}^T \E[ \hat k_t^{\rew} = \hat k_t^{\duel} = 1 ] = T - O(1)\),
and (ii)
under the first claim, the regret of the randomized decision-making policy is upper bounded by the last regret term in~\eqref{eq:defusion-regret}.
While the claim (i) is proved via similar techniques in~\citet{honda2010asymptotically} and~\citet{komiyama2015regret},
the proof of claim (ii) needs a novel analysis to handle the regret costs of randomized decision-making.
It needs to construct a sampling threshold regarding both feedback and then bounds actual sampling times of reward feedback \(N_{k,T}\) for each arm and dueling feedback for each pair \(M_{k,\ell,T}\) when \(\ell = 1\) and \(\ell \neq 1\) case-by-case. The choice of the threshold \(\frac{\alpha^2}{\alpha^2 + (1 - \alpha)^2}\) is crucial for balancing the regret costs of both feedback types.


\textbf{Near-optimal regret when \(\ell_k^* = 1\).}
The simplified regret upper bound in~\eqref{eq:defusion-simplified-regret} tightly matches the simplified regret lower bound in~\eqref{eq:simplified-lower-bound} in terms of all non-trivial factors (except for a universal constant).
However, this simplified lower bound only holds when the most effective comparison arm \(\ell_k^*\) is the optimal arm \(1\) for all arms \(k\).
Without this condition, the regret upper bounds in Theorem~\ref{thm:defusion-regret} are worse than the general lower bound in~\eqref{eq:regret-lower-bound},
where the main gap comes from that \defusion often uses the dueling estimated optimal arm \(\hat k_t^{\duel}\) to duel with the exploration arm \(\exparm\), where the \(\hat k_t^{\duel}\) may not be the most effective comparison arm \(\ell_k^*\).
This issue also exists in the design of optimal dueling bandit algorithms~\citep[RMED1 and RMED2]{komiyama2015regret}, which is only partially resolved by assuming the knowledge of time horizon \(T\) in dueling bandit literature.

\textbf{Physical meanings for regret when \(\alpha = 0\) or \(\alpha = 1\).}
When either \(\alpha = 0\) or \(\alpha = 1\), the leading term of the regret bound in~\eqref{eq:defusion-regret} vanishes, and the regret becomes a \(T\)-independent constant.
Because when \(\alpha = 0\), the reward-based regret \(\rregret\) is not counted in the regret, i.e., reward feedback is \emph{free}! In this case, the threshold \(\frac{\alpha^2}{\alpha^2 + (1 - \alpha)^2} = 0\) in the randomized decision-making, and the algorithm always explores arms via reward feedback (i.e., \emph{free exploration}) and exploit the estimated optimal arm \(\hat k_t^{\rew}\) by the dueling feedback, which only incurs a constant regret.
Similarly when \(\alpha = 1\), the duel feedback is free, and the algorithm enjoys free exploration from dueling feedback
and achieves a constant regret as well.
As illustrated in Figure~\ref{fig:alpha-impact-overview}, such an advantage is only achieved by \defusion, while \elimfusion always suffers \(O(\log T)\) regret cost.


\section{Experiments}\label{sec:experiments}

This section reports the numerical experiments of the proposed fusion algorithms.
We compare our algorithms with two baselines:
1) \textsc{ElimNoFusion} that maintains two separate sets for arm elimination, one based on reward feedback and the other on dueling;
2) \textsc{MEDNoFusion} (Minimum Empirical Divergence) that deploys the optimal algorithms from \citet{honda2010asymptotically} and \citet{komiyama2015regret} to choose arm $k_t$ and dueling pair $(k_{1,t}, k_{2,t})$ {separately}.

Figures~\ref{fig:final-regret},~\ref{fig:reward-gap}, and~\ref{fig:dueling-gap} illustrate the trends of aggregated regret ($\alpha=0.5$) in various settings.
The setup details are in Appendix~\ref{app:experiment-setup}.
\defusion outperforms all other algorithms across all settings.
In Figure~\ref{fig:final-regret}, the aggregated regret of \defusion is around \(41\) times lower than that of \elimfusion.
While \elimfusion outperforms \textsc{ElimNoFusion} by \(81.3\%\), and \defusion outperforms \textsc{MEDNoFusion} by \(47.5\%\),  both elimination-based approaches are worse than the other two due to the suboptimality of the elimination mechanism.
Figures~\ref{fig:reward-gap} and~\ref{fig:dueling-gap} examine the impact of reward and dueling gaps on total regret.
As either gap increases, distinguishing suboptimal arms becomes easier, thereby reducing the regret cost of the algorithms.
The slight increase in the regret of \defusion in Figure~\ref{fig:dueling-gap} is due to the regret incurred during the warm-up (Line~\ref{line:defusion-warm-up}), which scales with the dueling gap and constitutes a large portion of the relatively small total regret.


Figure~\ref{fig:alpha-defusion} investigates the impact of the parameter \(\alpha\) on \defusion.
The aggregated regret (\textcolor{blue}{blue}) reaches its maximum when \(\alpha = 0.5\) and decreases as \(\alpha\) is close to either \(0\) or \(1\), which verifies its free exploration property and constant regrets when \(\alpha = 0\) or \(1\).
The curves of reward-based and dueling-based regrets (\textcolor{red}{red} and \textcolor{green!50!black}{green}) corroborate the effectiveness of the randomized decision-making policy: when \(\alpha=0\), the algorithm assigns all explorations to reward feedback, while the case of \(\alpha = 1\) assigns all explorations to dueling feedback,
and one can tune \(\alpha\) to balance the regret cost between the two types of feedback.





\section{Conclusion}

This paper studies the fusion of reward and dueling feedback in stochastic multi-armed bandits, called \drbandit.
We derived regret lower bounds for \drbandit, and proposed two algorithms, \elimfusion and \defusion.
\elimfusion provides a simple and efficient way to fusion both feedback types via sharing the same candidate arm set, but its regret is suboptimal regarding a multiplicative factor of the number of arms.
\defusion, on the other hand, is designed to achieve the optimal regret up to a constant factor, by decoupling the suboptimal arms into two sets for reward and dueling feedback.
Both algorithms and the lower bound suggest that the advantage of fusing both feedback types is that it saves the higher regret costs among both feedback types for each suboptimal arm, achieving a better regret than solely relying on any one of the two.




\bibliography{bibliography}
\bibliographystyle{icml2025}

\newpage
\appendix
\onecolumn

\section{Related Works}\label{sec:related-works}




\textbf{Stochastic bandits with absolute or relative feedback.}
Stochastic multi-armed bandits (\mab) is a fundamental online learning model, introduced by~\citet{lai1985asymptotically} and later comprehensively studied, e.g., by~\citet{bubeck2012regret,slivkins2019introduction,lattimore2020bandit}. Later on, various extensions of the stochastic bandits have been proposed, such as the linear bandits~\citep{abbasi2011improved}, contextual bandits~\citep{li2010contextual}, and combinatorial bandits~\citep{chen2013combinatorial}.
The relative feedback in the stochastic bandits setting is known as dueling bandits, which is initialized by~\citet{yue2012k} and studied by~\citet{ailon2014reducing,komiyama2015regret,sui2018advancements,saha2022versatile}, etc.
All of the above models assume that the learner receives either the absolute feedback, i.e., the reward of the selected arm,
or the relative feedback, i.e., the winning arm in a pair of arms, but not both,
which are different from the study on fusion of absolute and relative feedback in this paper.



\textbf{Bandits with additional information.}
While we are the first to study the fusion of absolute and relative feedback in the stochastic bandits setting, there are prior works considering the absolute (reward) feedback with some types of ``augmentation'' feedback.
In the line of works of conversational bandits~\citep{zhang2020conversational,wang2023efficient,li2024fedconpe},
a learner interacts with a contextual linear bandit model, and
besides receiving the reward feedback from puling arms, the learner can also occasionally query ``key-terms'' to collect some side information of the bandit model.
Another line of works considers the bandits with ``hints'', where the ``hint'' can be additional reward observations~\citep{yun2018multi,lindstaahl2020predictive}, initial reward mean guesses~\citep{cutkosky2022leveraging}, or the order of the reward realizations among two or more arms~\citep{bhaskara2022online}, etc.
However, these types of augmentation feedback provide information directly related to the reward mean parameters (i.e., used to estimate the reward means),
which is different from the relative feedback depending on dueling probabilities in the dueling bandits setting, and therefore, they are different from our study on the fusion of absolute and relative feedback in this paper.

\textbf{Beyond stochastic bandits.} The stochastic bandit literature is only a small subset of the wide online learning research area~\citep{orabona2019modern}.
In contrast to the stochastic setting, the adversarial bandits---including adversarial \mab~\citep{auer2002nonstochastic} and adversarial dueling bandits~\citep{saha2021adversarial}---assume that the environment is determined by an adversary and may adapt to the learner's strategy.
Generalizing the bandits setting with state transition, reinforcement learning (RL)~\citep{sutton1998introduction} is another popular online learning model, where the learner interacts with the environment and receives reward feedback based on the state-action pairs.
Noticeably, there is a reinforcement learning with human feedback (RLHF) model~\citep{wang2023rlhf,ouyang2022training} in RL literature, which can be regarded as a counterpart of the dueling bandits in bandits literature, where the learner receives pairwise comparison (relative) feedback.
While there is vast online learning literature beyond the stochastic bandits, due ot the importance of the stochastic \mab, our study on the fusion of the absolute and relative feedback on bandits is a novel and unexplored topic.

\section{Deferred Pseudo-code}\label{app:deferred_pseudo_code}

This section presents the pseudo-code of initial warm-up phase (Algorithm~\ref{alg:initial-phase}) and the statistics update phase (Algorithm~\ref{alg:statistics-update}) of the proposed \drbandit~algorithms.

\begin{algorithm}[htbp]
    \caption{Warm-up (Initial phase)}
    \label{alg:initial-phase}
    \begin{algorithmic}[1]
        \STATE Duel each pair of arms \((k,\ell)\) for \(k\neq \ell\) once (in total \(K(K-1)\) times), meanwhile, query each arm for rewards in a round-robin manner
        \STATE \(t\gets K(K-1)\)
        \STATE Update \(M_{k,\ell,t}\) and \(\hat{\nu}_{k,\ell,t}\) for all pairs \((k,\ell)\in\mathcal{K}\times\mathcal{K}\) (\(k\neq \ell\)), and \(N_{k,t}\), \(\hat{\mu}_{k,t}\) for all arms \(k\in\mathcal{K}\)
    \end{algorithmic}
\end{algorithm}

\begin{algorithm}[htbp]
    \caption{Statistics update}
    \label{alg:statistics-update}
    \begin{algorithmic}[1]
        \FOR{all arm pairs \((k,\ell)\in \mathcal{K}\times \mathcal{K}\)}

        \IF{\((k,\ell) = (k_{t,1}, k_{t,2}) \text{ or } (k_{t,2}, k_{t,1})\)}
        \STATE \(M_{k,\ell,t+1} \gets M_{k,\ell,t} + 1\)
        \STATE \(\hat{\nu}_{k,\ell,t+1} \gets \frac{\hat{\nu}_{k,\ell,t}M_{k,\ell,t} + \1{Y_{k,\ell,t} = k}}{M_{k,\ell,t+1}}\)

        \ELSE
        \STATE \(M_{k,\ell,t+1} \gets M_{k,\ell,t}\) and \(\hat{\nu}_{k,\ell,t+1} \gets  \hat{\nu}_{k,\ell,t}\)

        \ENDIF
        \ENDFOR

        \FOR{all arms \(k\in \mathcal{K}\)}

        \IF{\(k = k_{t}\)}
        \STATE \(N_{k, t+1} \gets N_{k,t} + 1\)
        \STATE \(\hat{\mu}_{k,t+1} \gets \frac{\hat{\mu}_{k,t}N_{k,t} + X_{k_t,t}}{N_{k,t+1}}\)

        \ELSE
        \STATE \(N_{k, t+1} \gets N_{k, t}\) and \(\hat{\mu}_{k,t+1} \gets  \hat{\mu}_{k,t}\)

        \ENDIF

        \ENDFOR
    \end{algorithmic}
\end{algorithm}

\section{Proof of Lower Bound}\label{app:proof-lower-bound}

\begin{proof}[Proof of Lemma~\ref{lma:information-lower-bound}]
    \textbf{Step 1. Instance and event construction.}
    Pick any suboptimal arm \(k\neq 1\).
    We use the standard parameters defined in model section as the original instance \(\mathcal{I}\) and then construct an alternative instances \(\mathcal{I}'\) with parameters with a prime, e.g., \(\mu_k'\) and \(\nu_{k,\ell}'\), as follows,
    \begin{itemize}
        \item Reward means \(\mu_\ell' = \begin{cases}
                  \mu_\ell         & \text{ if } \ell\neq k
                  \\
                  \mu_1 + \epsilon & \text{ if } \ell = k
              \end{cases},\)
        \item Dueling probabilities \(
              \nu_{\ell_1, \ell_2} = \begin{cases}
                  \frac 1 2 + \epsilon_{\ell_2} & \text{ if } \ell_1 = k \text{ and } \ell_2 < k
                  \\
                  \frac 1 2 - \epsilon_{\ell_1} & \text{ if } \ell_1 <  k \text{ and } \ell_2 = k
                  \\
                  \nu_{\ell_1, \ell_2}          & \text{otherwise}
              \end{cases},
              \)
    \end{itemize}
    where the \(\epsilon>0\) is a small constant to be determined later, and
    the \(\epsilon_\ell\) parameters are chosen such that \(\kl(\nu_{k,\ell}, \frac{1}{2} + \epsilon_\ell) = \kl(\nu_{k,\ell}, \frac{1}{2}) + \epsilon\).
    Under this instance construction, the optimal arms are arm \(1\) and arm \(k\) in the original and alternative instances, respectively.
    All reward distributions are Bernoulli.
    We denote \(\mathbb{E},\mathbb{P}\) and \(\mathbb{E}',\mathbb{P}'\) as the expectation and probability under the original and alternative instances, respectively.

    To facilitate the rest of the analysis, we define the empirical KL-divergence as follows,
    \begin{align}
        \widehat{\KL}^{\duel}_\ell (n)
         & \coloneqq \sum_{s=1}^n \log\left( \frac{Y_{k,\ell, (s)}\nu_{k,\ell} + (1 - Y_{k,\ell,(s)})(1 - \nu_{k,\ell})}{Y_{k,\ell, (sn)}\nu_{k,\ell}' + (1 - Y_{k,\ell,(s)})(1 - \nu_{k,\ell}')} \right),
        \\
        \widehat{\KL}^{\rew}_k (n)
         & \coloneqq \sum_{s=1}^n \log\left( \frac{X_{k,(s)}\mu_k + (1-X_{k,(s)})(1-\mu_k)}{X_{k,(s)}\mu_k' + (1-X_{k,(s)})(1-\mu_k')} \right),
        \\
        \widehat{\KL}(\mathcal{I}, \mathcal{I}')
         & \coloneqq \widehat{\KL}^{\rew}_k (N_{k,T}) + \sum_{\ell < k} \widehat{\KL}^{\duel}_\ell (M_{k,\ell,T}),
    \end{align}
    where the subscript \((s)\) refers to the \(s^{\text{th}}\) observation of the corresponding random variable, which differs from the time index.
    With these empirical KL-divergences defined, for any event \(\mathcal{E}\), we have the following relation holds (change of measure),  \begin{align}
        \mathbb{P}'(\mathcal{E}) = \mathbb{E}\left[\1{\mathcal{E}}\exp(-\widehat{\KL}(\mathcal{I}, \mathcal{I}'))\right].
    \end{align}

    In the end of the first step, we define two events as follows,
    \begin{align}
        \mathcal{D}_1 & \coloneqq \left\{ \sum_{\ell < k} \kl\left(\nu_{k,\ell}, \nu_{k,\ell}'\right) M_{k,\ell, T}
        +
        \kl(\mu_k, \mu_k')N_{k,T} <  (1-\epsilon) \log T \right\},
        \\
        \mathcal{D}_2 & \coloneqq \left\{ \widehat{\KL}(\mathcal{I}, \mathcal{I}')\le \left( 1-\frac{\epsilon}{2} \right) \log T\right\}.
    \end{align}
    In the remaining of this proof, we use two steps to prove that \(\mathbb{P}(\mathcal{D}_1) = o(1)\), which then implies the lemma.

    \textbf{Step 2. Prove \(\mathbb{P}(\mathcal{D}_1\cap \mathcal{D}_2) = o(1)\).} We first apply the change of measure argument to transfer the measure of the event \(\mathcal{D}_1\cap \mathcal{D}_2\) from instance \(\mathcal{I}\) to instance \(\mathcal{I}'\),
    \begin{align}
        \mathbb{P}'(\mathcal{D}_1\cap \mathcal{D}_2)
         & = \mathbb{E}[\1{\mathcal{D}_1\cap \mathcal{D}_2}\exp(-\widehat{\KL}(\mathcal{I}, \mathcal{I}'))]
        \ge \mathbb{E}[\1{\mathcal{D}_1\cap \mathcal{D}_2}] T^{-(1-\frac{\epsilon}{2})},
    \end{align}
    which yields
    \begin{align}
        \mathbb{P}(\mathcal{D}_1\cap \mathcal{D}_2)
         & \le T^{1-\frac{\epsilon}{2}} \mathbb{P}'(\mathcal{D}_1\cap \mathcal{D}_2)
        \\
         & \le T^{1 - \frac{\epsilon}{2}} \mathbb{P}'\left(N_{k,T} <  \frac{(1-\epsilon)\log T}{\kl(\mu_k, \mu_k')}  \right)
        \\
         & = T^{1 - \frac{\epsilon}{2}} \mathbb{P}'\left( T - N_{k,T} > T -  \frac{(1-\epsilon)\log T}{\kl(\mu_k, \mu_k')}  \right)
        \\
         & \le T^{1 - \frac{\epsilon}{2}} \frac{T - \mathbb{E}'[N_{k,T}]}{T - ({(1-\epsilon)\log T}/{\kl(\mu_k, \mu_k')})} \label{eq:markov-inequality-for-D-1-and-D-2}
        \\
         & =  T^{1 - \frac{\epsilon}{2}} \frac{ \sum_{\ell\neq k}\mathbb{E}'[N_{\ell,T}]}{T - ({(1-\epsilon)\log T}/{\kl(\mu_k, \mu_k')})}
        \\
         & \le o(T^{\gamma - \frac{\epsilon}{2}}) = o(1), \label{eq:apply-consistent-policy}
    \end{align}
    where inequality~\eqref{eq:markov-inequality-for-D-1-and-D-2} is due to the Markov inequality,
    and inequality~\eqref{eq:apply-consistent-policy} is due to the consistent policy assumption (Definition~\ref{def:consistent-algorithm}) that for any suboptimal arm \(\ell\neq k\) under instance \(\mathcal{I}'\), we have \(\mathbb{E}'[N_{\ell,T}] = o(T^\gamma)\) for any positive \(\gamma\).

    \textbf{Step 3. Prove \(\mathbb{P}(\mathcal{D}_1\setminus \mathcal{D}_2) = o(1)\).} To prove this claim, we first telescope the \(\mathbb{P}(\mathcal{D}_1\setminus \mathcal{D}_2)\) and focus on bounding of the empirical KL-divergence as follows,
    \begin{align}
         & \quad\,\, \mathbb{P}(\mathcal{D}_1\setminus \mathcal{D}_2)
        \\
         & = \mathbb{P} \left( \sum_{\ell < k} \kl\left(\nu_{k,\ell}, \nu_{k,\ell}'\right) M_{k,\ell, T}
        +
        \kl(\mu_k, \mu_k')N_{k,T} <  (1-\epsilon) \log T,
        \, \widehat{\KL}(\mathcal{I}, \mathcal{I}') > \left( 1-\frac{\epsilon}{2} \right) \log T \right)
        \\
         & \le \mathbb{P} \left( \sum_{\ell < k} \kl\left(\nu_{k,\ell}, \nu_{k,\ell}'\right) M_{k,\ell, T}
        +
        \kl(\mu_k, \mu_k')N_{k,T} <  (1-\epsilon) \log T, \right.
        \\
         & \qquad\qquad\qquad\left.
        \max_{\substack{
                n_k, m_\ell \in \mathbb{N}^+,\forall \ell < k:  \sum_{\ell < k} \kl\left(\nu_{k,\ell}, \nu_{k,\ell}'\right) m_\ell
        \\
                + \kl(\mu_k, \mu_k') n_k < (1-\epsilon) \log T
            }}
        \widehat{\KL}^{\rew}_k (n_k) + \sum_{\ell < k} \widehat{\KL}^{\duel}_\ell (m_\ell) > \left( 1-\frac{\epsilon}{2} \right) \log T \right)
        \\
         & \le \mathbb{P} \left(
        \max_{\substack{
                n_k, m_\ell \in \mathbb{N}^+,\forall \ell < k: \sum_{\ell < k} \kl\left(\nu_{k,\ell}, \nu_{k,\ell}'\right) m_\ell
        \\
                + \kl(\mu_k, \mu_k') n_k < (1-\epsilon) \log T
            }}
        \widehat{\KL}^{\rew}_k (n_k) + \sum_{\ell < k} \widehat{\KL}^{\duel}_\ell (m_\ell) > \left( 1-\frac{\epsilon}{2} \right) \log T \right).
    \end{align}

    Next, we a variant of the maximal law of large numbers (LLN)~\citep[Lemma 10.5]{bubeck2010bandits} to bound the empirical KL-divergence.
    To guarantee that all sample times \(N_{k,T}\) and \(M_{k,\ell, T}\) are large enough to apply LLN, we set the lower bound of \(N_{k,T}\) and \(M_{k,\ell, T}\) as \(\delta\log T\) for some small constant \(\delta > 0\) as follows,
    \begin{align}
         & \quad\,\, \frac{\max_{\substack{
                    n_k, m_\ell \in \mathbb{N}^+,\forall \ell < k: \sum_{\ell < k} \kl\left(\nu_{k,\ell}, \nu_{k,\ell}'\right) m_\ell
        \\
                    + \kl(\mu_k, \mu_k') n_k < (1-\epsilon) \log T
                }}
            \widehat{\KL}^{\rew}_k (n_k) + \sum_{\ell < k} \widehat{\KL}^{\duel}_\ell (m_\ell)}{\log T}
        \\\label{eq:kl-average-limit-range}
         & \le \frac{\max_{\substack{
                    n_k, m_\ell \in \mathbb{N}^+, n_k, m_\ell > \delta\log T, \forall \ell < k, :
        \\
                    \sum_{\ell < k} \kl\left(\nu_{k,\ell}, \nu_{k,\ell}'\right) m_\ell + \kl(\mu_k, \mu_k') n_k < (1-\epsilon) \log T
                }}
            \widehat{\KL}^{\rew}_k (n_k) + \sum_{\ell < k} \widehat{\KL}^{\duel}_\ell (m_\ell)}{\log T}
        \\
         &
        \qquad\qquad\qquad\qquad\qquad\qquad\qquad\qquad\qquad\qquad\qquad\qquad
        + \frac{\delta (k-1) }{\min_{\ell < k} \kl\left(\nu_{k,\ell}, \nu_{k,\ell}'\right)} + \frac{\delta}{\kl(\mu_k, \mu_k')}.
    \end{align}

    Then, because the maximal LLM~\citep[Lemma 10.5]{bubeck2010bandits} implies \(\lim_{N\to\infty} \max_{1\le n \le N}\frac{\widehat{\KL}_k^{\rew}(n)}{N} = \kl(\mu_k, \mu_k')\) and  \(\lim_{M\to\infty} \max_{1\le m \le M}\frac{\widehat{\KL}_\ell^{\duel}(m)}{M} = \kl(\nu_{k,\ell}, \nu_{k,\ell}')\) for any arm \(\ell < k\), almost surely (a.s.), we bound the first term in the right-hand side of~\eqref{eq:kl-average-limit-range} by the maximal LLN as follows,
    \begin{align}\label{eq:apply-lemma-10.5}
         & \limsup_{T\to \infty} \frac{\max_{\substack{
                    n_k, m_\ell \in \mathbb{N}^+, n_k, m_\ell > \delta\log T, \forall \ell < k, :
        \\
                    \sum_{\ell < k} \kl\left(\nu_{k,\ell}, \nu_{k,\ell}'\right) m_\ell + \kl(\mu_k, \mu_k') n_k < (1-\epsilon) \log T
                }}
            \widehat{\KL}^{\rew}_k (n_k) + \sum_{\ell < k} \widehat{\KL}^{\duel}_\ell (m_\ell)}{\log T}
        \le 1-\epsilon.
    \end{align}

    Therefore, combining~\eqref{eq:kl-average-limit-range} and~\eqref{eq:apply-lemma-10.5}, we have \begin{align}
         & \frac{\max_{\substack{
                    n_k, m_\ell \in \mathbb{N}^+,\forall \ell < k: \sum_{\ell < k} \kl\left(\nu_{k,\ell}, \nu_{k,\ell}'\right) m_\ell
        \\
                    + \kl(\mu_k, \mu_k') n_k < (1-\epsilon) \log T
                }}
            \widehat{\KL}^{\rew}_k (n_k) + \sum_{\ell < k} \widehat{\KL}^{\duel}_\ell (m_\ell)}{\log T}
        \le 1 - \epsilon + \Theta(\delta), \text{ a.s.}
    \end{align}
    Noticing \(1-\epsilon < 1 - \frac{\epsilon}{2}\) and letting \(\delta \to 0\), we proved that
    \begin{align}
        \mathbb{P}(\mathcal{D}_1\setminus \mathcal{D}_2)
         & \le \mathbb{P} \left(
        \max_{\substack{
                n_k, m_\ell \in \mathbb{N}^+,\forall \ell < k: \sum_{\ell < k} \kl\left(\nu_{k,\ell}, \nu_{k,\ell}'\right) m_\ell
        \\
                + \kl(\mu_k, \mu_k') n_k < (1-\epsilon) \log T
            }}
        \widehat{\KL}^{\rew}_k (n_k) + \sum_{\ell < k} \widehat{\KL}^{\duel}_\ell (m_\ell) > \left( 1-\frac{\epsilon}{2} \right) \log T \right)
        \\
         & = o(1).
    \end{align}

    Lastly, by letting \(\epsilon\) be infinitesimally small, we have \(\mathbb{P}(\mathcal{D}_1) = o(1)\), which implies the lemma.
\end{proof}

\begin{proof}[Proof of Theorem~\ref{thm:regret-lower-bound} (regret lower bound)]
    We first present the regret decomposition in terms of the reward and dueling feedback as follows,
    \begin{align}\label{eq:regret-decomposition}
        R_T = \sum_{k>1} \left( \alpha \Delta_k^{\rew} N_{k,T} + \sum_{\ell < k} (1-\alpha)\left( \Delta^{\duel}_k + \Delta^{\duel}_\ell \right) M_{k,\ell, T}\right).
    \end{align}

    Then, with Lemma~\ref{lma:information-lower-bound}, its corresponding regret of each suboptimal arm \(k \neq 1\) can be lower bounded by the following optimizing problem,
    \begin{align}
        \min \quad
         & \alpha \Delta_k^{\rew} N_{k,T} + \sum_{\ell < k} (1-\alpha)\left( \Delta^{\duel}_k + \Delta^{\duel}_\ell \right) M_{k,\ell, T} \\
        \text{s.t.} \quad
         & \sum_{\ell < k} \kl\left(\nu_{k,\ell}, \frac{1}{2}\right) M_{k,\ell, T} + \kl(\mu_k, \mu_1) N_{k,T} \ge (1-o(1)) \log T,
    \end{align}
    whose asymptotical solution is as follows,
    \begin{align}
         & \quad \liminf_{T\to\infty} \frac{\E[\Delta_k^{\rew} N_{k,T} + \sum_{\ell < k} (1-\alpha)\left( \Delta^{\duel}_k + \Delta^{\duel}_\ell \right) M_{k,\ell, T}]}{\log T}
        \ge \min\left\{  \frac{\alpha\Delta_k}{\kl(\mu_k, \mu_1)}
        , \min_{\ell < k} \frac{(1-\alpha) (\Delta^{\duel}_k + \Delta^{\duel}_{\ell})}{\kl\left( \nu_{k,\ell}, \frac 1 2 \right)}\right\}.
    \end{align}
    Lastly, we substitute the above lower bound for each terms in~\eqref{eq:regret-decomposition} and obtain the regret lower bound.
\end{proof}
\section{Proof of Upper Bounds}

\subsection{Regret Upper Bound for \elimfusion (Algorithm~\ref{alg:elimination-fusion}, Theorem~\ref{thm:elimination-fusion})}\label{app:proof-elimination-fusion}

\begin{proof}[Proof of Theorem~\ref{thm:elimination-fusion}]
    \textbf{Step 1. Bound small probability event.} We first show the concentration of the reward and dueling probabilities as follows,
    \begin{align}
        \P\left( \abs*{\hat{\mu}_{k,t} - \mu_k} \ge \sqrt{\frac{2\log (Kt/\delta)}{N_{k,t}}}  \right)
         & = \sum_{n=1}^t \P\left( \left.\abs*{\hat{\mu}_{k,t} - \mu_k} \ge \sqrt{\frac{2\log (Kt/\delta)}{n}} \right\rvert N_{k,t} = n \right)\P\left( N_{k,t} = n \right) \label{eq:total-probability}
        \\
         & \le \sum_{n=1}^t \P\left( \abs*{\hat{\mu}_{k,t} - \mu_k} \ge \sqrt{\frac{2\log (Kt/\delta)}{n}} \right)
        \\
         & \le \sum_{n=1}^t 2\exp\left( -4\log (Kt/\delta) \right) \label{eq:hoeffding}
        \\
         & \le \frac{2\delta^4}{K^4t^3}
    \end{align}
    where inequality~\eqref{eq:total-probability} follows from the formula of total probability, and
    inequality~\eqref{eq:hoeffding} follows from Hoeffding's inequality.

    Therefore, we have
    \begin{align}\label{eq:small-probability-event-reward}
        \P\left( \forall t\in\mathcal{T}, k\in\mathcal{K}, \abs*{\hat{\mu}_{k,t} - \mu_k} \ge \sqrt{\frac{2\log (Kt/\delta)}{N_{k,t}}}  \right)
         & \overlabel{a}\le \sum_{t=1}^T\sum_{k=1}^K \frac{\delta^4}{K^4t^3}
        \le \frac{\delta^4}{K^3} \sum_{t=1}^T \frac{1}{t^3} \le \frac{3\delta^4}{2K^3},
    \end{align}
    where inequality (a) follows from the union bound and the above concentration.

    With a similar argument, we can show that the dueling probabilities also concentrate as follows,
    \[
        \P\left( \forall t\in\mathcal{T}, k,\ell\in\mathcal{K}, \abs*{\hat{\nu}_{k,\ell,t} - \nu_{k,\ell}} \ge \sqrt{\frac{2\log (Kt/\delta)}{M_{k,\ell,t}}}  \right)
        \le \frac{3\delta^4}{4K^2},
    \]
    where the RHS's dominator different from~\eqref{eq:small-probability-event-reward} is because the number of arm pairs is \(K(K-1)/2\).

    \textbf{Step 2. Bound the maximal sample times for both types of feedback.}

    \emph{Elimination with reward feedback.}
    One suboptimal arm \(k\) is eliminated by reward feedback if the following event happens, \[
        \hat{\mu}_{1,t} - \hat{\mu}_{k,t} \ge
        \mu_1 - \mu_k - 2\sqrt{\frac{2\log (Kt/\delta)}{N_{k,t}}}
        > 2\sqrt{\frac{2\log (Kt/\delta)}{N_{k,t}}}
        ,\]
    which would happen on or before the rearranged expression as follows,
    \[
        N_{k,t} > \frac{32\log (Kt/\delta)}{(\Delta_k^{\rew})^2}.
    \]
    In other words, from the reward feedback and its corresponding elimination, the sample times of arm \(k\) are upper bounded by \(\frac{32\log (KT/\delta)}{(\Delta_k^{\rew})^2}\).

    \emph{Elimination with dueling feedback.} The sample times for an arm \(k\) from the dueling feedback is given by noticing that for any arm \(\ell < k\), when \[
        \hat{\nu}_{k,\ell,t} + \sqrt{\frac{2\log (Kt/\delta)}{M_{k,\ell,t}}} \le {\nu}_{k,\ell} + 2\sqrt{\frac{2\log (Kt/\delta)}{M_{k,\ell,t}}}
        <  \frac{1}{2}
        ,\]
    that is,
    \[
        M_{k,\ell,t} \ge \frac{8\log (Kt/\delta)}{(\Delta^{\duel}_k)^2},
    \]
    for all arms \(\ell\), then the arm \(k\) must have been eliminated from the candidate arm set \(\mathcal{C}\) by Line~\ref{line:dual-elimination} of Algorithm~\ref{alg:elimination-fusion}.
    Therefore, the sampling times of arm pair \((k,\ell)\) is upper bounded by \(\frac{8\log (KT/\delta)}{(\Delta^{\duel}_k)^2}\).



    \textbf{Step 3. Bound the regret.}
    Each suboptimal arm \(k\) may be eliminated by either reward or dueling feedback. So, the regret incurred by the reward querying of arm \(k\) is upper bounded as follows,
    \begin{align}
        R_{k,T}^{\rew}
         & \le
        \Delta_k^{\rew} \cdot
        \min\left\{
        \frac{32\log (KT/\delta)}{(\Delta_k^{\rew})^2},
        \frac{K-1}{2} \cdot \frac{8\log(KT/\delta)}{(\Delta^{\duel}_k)^2}
        \right\}
        = \frac{4\Delta_k^{\rew}\log (KT/\delta)}{\max\{
            (\Delta_k^{\rew})^2/8, (\Delta_k^{\duel})^2/4(K-1)\}},
    \end{align}
    where the factor \(\frac{K-1}{2}\) is an upper bound of the ratio of sampling collection rate between reward and dueling feedback
    because the number of arms in the candidate arm set \(\mathcal{C}\) is at most \(\frac{K-1}{2}\) times smaller than the number of arm pairs in the set.

    Next, we bound the regret incurred by the dueling querying of arm \(k\).  If the arm \(k\) is eliminated by the dueling feedback, then the dueling regret due to arm \(k\) can be upper bounded as follows,
    \[
        \sum_{\ell<k} \frac{(\Delta^{\duel}_{\ell} + \Delta^{\duel}_k - 1)}{2} \cdot M_{\ell,k,T}
        \le
        \sum_{\ell< k} \Delta^{\duel}_k \frac{8\log (KT/\delta)}{(\Delta^{\duel}_k)^2}
        \le \frac{8(k-1)\log (KT/\delta)}{\Delta^{\duel}_k}.
    \]

    If the arm \(k\) is eliminated by the reward feedback, implying the algorithm has queried the reward of arm \(k\) for at most \(\frac{32\log(KT/\delta)}{(\Delta_k^{\rew})^2}\) times. During this period, dueling regret cost of any arm pair \((k,\ell)\) from some arm \(\ell < k\) is upper bounded as follows,
    \[
        \frac{(\Delta^{\duel}_{\ell} + \Delta^{\duel}_k - 1)}{2} \times 2\times \frac{32\log(KT/\delta)}{(\Delta_k^{\rew})^2}
        =  \frac{64\Delta_k^{\duel}\log(KT/\delta)}{(\Delta_k^{\rew})^2},
    \]
    where the factor \(2\) every reward query of arm \(k\) is accompanied by at most two dueling comparison of arm pairs \((k,\ell)\) from some arm \(\ell < k\).

    Taking the minimal among the above two cases, we have the dueling regret upper bound as follows,
    \begin{align}
        R_{k,T}^{\duel} & \le \Delta_k^{\duel}\cdot \min\left\{
        \frac{64\log(KT/\delta)}{(\Delta_k^{\rew})^2},
        \frac{8(k-1)\log (KT/\delta)}{(\Delta^{\duel}_k)^2}
        \right\}
        =  \frac{8\Delta^{\duel}_k\log (KT/\delta)}{\max\{(\Delta_k^{\rew})^2/8, (\Delta^{\duel}_k)^2/(k-1)\}}.
    \end{align}

    Lastly, we bound the total regret as follows,
    \begin{align}
        \regret
         & \le \sum_{k\neq 1} \alpha R_{k,T}^{\rew} + (1-\alpha) R_{k,T}^{\duel}
        \\
         & \le \sum_{k\neq 1} \frac{4\alpha\Delta_k^{\rew} \log (KT/\delta)}{\max\{(\Delta_k^{\rew})^2/8, (\Delta^{\duel}_k)^2/(K-1)\}}
        + \frac{8(1 - \alpha)\Delta^{\duel}_k\log (KT/\delta)}{\max\{(\Delta_k^{\rew})^2/8, (\Delta^{\duel}_k)^2/(k-1)\}}
        \\
         & \le \sum_{k\neq 1} \frac{8(\alpha\Delta^{\rew}_k + (1-\alpha)\Delta_k^{\duel}) \log (KT/\delta)}{\max\{(\Delta_k^{\rew})^2/8, (\Delta^{\duel}_k)^2/(K-1)\}}.
    \end{align}
\end{proof}

\subsection{Regret Upper Bound for \defusion (Algorithm~\ref{alg:defusion}, Theorem~\ref{thm:defusion-regret})}\label{app:proof-defusion-regret-bound}

\begin{proof}[Proof of Regret Upper Bound of Algorithm~\ref{alg:defusion}]
    \textbf{Step 1. Event definition.}
    We start the proof by defining the following events,
    \begin{align}
        \mathcal{A}_t
         & \coloneqq \left\{
        \hat\mu_{1, t} \ge \hat\mu_{k,t}, \forall k\in \hat{\mathcal{K}}^{\rew}_t, \hat{\mu}_{1,t} \ge \mu_1 - \delta,
        \text{ and } \hat\nu_{1,k,t} \ge \frac{1}{2}, \forall k\in \hat{\mathcal{K}}^{\duel}_t
        \right\},
        \\
        \mathcal{G}_t
         & \coloneqq \left\{
        \hat\mu_{k_t^{\rew},t} \le
        \max_{k\in\hat{\mathcal{K}}^{\rew}_t\setminus \{1\}} \mu_k + \delta
        \right\},
        \\
        \mathcal{H}_t
         & \coloneqq \bigcup_{k\in\hat{\mathcal{K}}^{\rew}_t} \left\{
        \hat{\mu}_{k_t^{\rew}, t} = \hat{\mu}_{k,t}
        \text{ and } \abs{\hat{\mu}_{k,t} - \mu_k} \ge \delta
        \right\},
        \\
        \mathcal{U}_t
         & \coloneqq \bigcup_{\mathcal{S}\in 2^{\hat{\mathcal{K}}^{\duel}_t} \setminus \{\emptyset\}}
        \left\{
        \hat\nu_{1, k, t} \ge \frac{1}{2}, \forall k\in\mathcal{S}
        \text{ and } \hat\nu_{1, k, t} < \frac{1}{2},\forall k\in\hat{\mathcal{K}}_t^{\duel}\setminus \mathcal{S}
        \right\},
    \end{align}
    where the parameter \(\delta\) is a small positive constant that satisfies \(0 < \delta < \Delta_2^{\rew}\).
    The \(\mathcal{A}_t\) refers to a good event, implying that the algorithm has correctly estimated the optimal arm at time slot \(t\), i.e., \(\hat k_t^{\rew} = \hat k_t^{\duel} = 1\),
    and the other three refer to the bad events that the algorithm may not correctly estimate the optimal arm, either due to bad reward estimates or bad dueling probability estimates.

    Notice that (1) if both events \(\mathcal{G}_t\) and \(\mathcal{H}_t\) do not happen, i.e., \(\mathcal{G}_t^{\text{C}}\cap \mathcal{H}_t^{\text{C}}\), then the first condition of event \(\mathcal{A}_t\) holds,
    and (2) if event \(\mathcal{U}_t\) does not happen, then the second condition of event \(\mathcal{A}_t\) holds.
    Putting these together, we have the following relation between these events,
    \begin{align}\label{eq:defusion-proof-event-relation}
        \mathcal{G}_t^{\text{C}} \cap \mathcal{H}_t^{\text{C}} \cap \mathcal{U}_t^{\text{C}} \subseteq \mathcal{A}_t,
        \text{ that is, }
        \mathcal{A}_t^{\text{C}} \subseteq
        \mathcal{G}_t \cup \mathcal{H}_t \cup \mathcal{U}_t.
    \end{align}

    In the rest of this proof, we show two claims:
    (1) the number of times that any of these three bad events \(\mathcal{G}_t\), \(\mathcal{H}_t\), and \(\mathcal{U}_t\) happens is bounded by \(O(1)\) (a term independent of time horizon \(T\)),
    and
    (2) for the time slots that the good event \(\mathcal{A}_t\) happens,
    the regret of \defusion is upper bounded as~\eqref{eq:defusion-regret} shows.

    \textbf{Step 2. Bound the number of happening times of the bad events.}
    Following~\citet[Lemmas 16 and 17]{honda2010asymptotically}, we bound \(\E\left[ \sum_{t=1}^T \1{\mathcal{G}_t} \right] \le O(1),
    \)
    and \(\E\left[ \sum_{t=1}^T \1{\mathcal{H}_t} \right] \le O(1)\).
    Following~\citet[Lemma 5]{komiyama2015regret},
    we bound \(\E\left[ \sum_{t=1}^T \1{\mathcal{E}_t} \right] \le O(e^{K - f(K)}).\)
    Putting these results together, we have \begin{align}
        \E\left[ \sum_{t=1}^T \1{\mathcal{A}_t^C} \right]
         & \le \E\left[ \sum_{t=1}^T \1{\mathcal{G}_t} \right] + \E\left[ \sum_{t=1}^T \1{\mathcal{H}_t} \right] + \E\left[ \sum_{t=1}^T \1{\mathcal{E}_t} \right]
        \\
         & \le O(1) + O(1) + O(e^{cK - f(K)}) \le O(e^{cK - f(K)}),
    \end{align}
    for some constant \(c > 0\).

    \textbf{Step 3. Bound regret.}
    Denote \(\beta \coloneqq \frac{\alpha^2}{\alpha^2 + (1 - \alpha)^2}\) as the threshold in Line~\ref{line:defusion-random-decision-choice} of Algorithm~\ref{alg:defusion}.
    We first define the following three quantities as the sufficient number of times for exploring arm \(k\) from different types of feedback, for sufficiently small \(\epsilon > 0\),
    \begin{align}
        N_{k}^{\suff}
         & \coloneqq \frac{(1 + \epsilon)\log T + f(K)}{\kl(\mu_k, \mu_1)},
        \\
        M_{k,1}^{\suff}
         & \coloneqq \frac{(1 + \epsilon)\log T + f(K)}{\kl(\nu_{k,1}, \frac{1}{2})},
        \\
        L_{k}^{\suff}
         & \coloneqq \min \left\{  N_{k,t}^{\suff} / (1-\beta),  M_{k,1,t}^{\suff} / \beta \right\}
        = \frac{(1 + \epsilon)\log T + f(K)}{\max\left\{(1-\beta)\kl(\mu_k, \mu_1), \beta \kl(\nu_{k,1}, \frac{1}{2})\right\}}.
    \end{align}

    Next, we bound the expected number of times of exploring arm \(k\) for different types of feedback as follows,
    \begin{align}
        \E[N_{k,T}]
         & \le K/2 + \E\left[ \sum_{t=1}^T \1{\mathcal{A}_t^C} \right] +  (1 - \beta) \cdot \E\left[ \sum_{t=1}^T \1{\mathcal{J}_{k,t} \text{ and } \mathcal{A}_t} \right], \label{eq:bound-N}
        \\
        \E[M_{k,1,T}]
         & \le 1 + \E\left[ \sum_{t=1}^T \1{\mathcal{A}_t^C} \right] +  \beta \cdot \E\left[ \sum_{t=1}^T \1{\mathcal{J}_{k,t} \text{ and } \mathcal{A}_t} \right], \label{eq:bound-M}
        \\
        \E[M_{k,\ell,T}]
         & \le 1 + \E\left[ \sum_{t=1}^T \1{\mathcal{A}_t^C} \right], \quad\forall 1 < \ell \le k, \label{eq:bound-M-ell}
    \end{align}
    where the first constant term comes from the initialization of the algorithm (Algorithm~\ref{alg:initial-phase}), the last terms of~\eqref{eq:bound-N} and~\eqref{eq:bound-M} is due to the imbalanced exploration in Line~\ref{line:defusion-random-decision-choice} of Algorithm~\ref{alg:defusion}, and the missing third term of~\eqref{eq:bound-M-ell} is because when event \(\mathcal{A}_t\) happens, the algorithm does not explore arm pair \((k, \ell)\) for any suboptimal arm \(\ell \neq 1\).

    \begin{align}
         & \quad\, \E\left[ \sum_{t=1}^T \1{\mathcal{J}_{k,t} \text{ and } \mathcal{A}_t} \right]
        \\
         & \le L_k^{\suff} + \sum_{t=1}^T \sum_{n,m > L_k^{\suff}}  \E\left[ \1{\mathcal{J}_{k,t} \text{ and } \mathcal{A}_t \text{ and } N_{k,t} = n, M_{k,1,t} = m }\right]
        \\
         & \le L_k^{\suff} + \sum_{t=1}^T \sum_{n, m > L_k^{\suff}} \E\left[
        \mathbbm{1} \left\{ \max\left\{ I_{k,t}^{\rew}, I_{k,t}^{\duel} - I_{\hat k_t^{\duel},t}^{\duel} \right\} \le \log t + f(K) \text{ and } \mathcal{A}_t \text{ and } N_{k,t} = n, M_{k,1,t} = m \right\}  \right]
        \\\label{eq:defusion-estimate-notation-change}
         & \le L_k^{\suff} + \sum_{n, m > L_k^{\suff}} \E\left[
        \1*{ L_k^{\suff} \max\left\{ \kl(\hat{\mu}_{k}(n), \mu_1 - \delta),  \kl(\hat\nu_{k,1}(m), \frac 1 2)\right\} \le \log T + f(K) }  \right]
        \\\label{eq:defusion-substitute-L-suff}
         & \le L_k^{\suff} + \sum_{n, m > L_k^{\suff}}\E\left[
            \1*{ \max\left\{ \kl(\hat{\mu}_{k,t}(n), \mu_1 - \delta),  \kl(\hat\nu_{k,1,t}(m), \frac 1 2)\right\} \le\frac{\max\left\{\kl(\mu_k, \mu_1),  \kl(\nu_{k,1}, \frac{1}{2})\right\}}{1+\epsilon} }  \right]
        \\
         & \le L_k^{\suff} + \sum_{n > L_k^{\suff}}\E\left[\1*{  \kl(\hat{\mu}_{k,t}(n), \mu_1 - \epsilon) \le\frac{\kl(\mu_k, \mu_1)}{1 + \epsilon} }  \right]
        + \sum_{m > L_k^{\suff}}\E\left[
            \1*{ \kl(\hat\nu_{k,1,t}(m), \frac 1 2) \le\frac{ \kl(\nu_{k,1}, \frac{1}{2})}{1 + \epsilon} }  \right]
        \\
         & \le L_k^{\suff} + O(\epsilon^{-2}), \label{eq:bound-kl-delta-event}
    \end{align}
    where
    inequality~\eqref{eq:defusion-estimate-notation-change} applies the \(\hat \mu_k (n)\) to denote the reward mean estimate with \(n\) samples, and \(\hat \nu_{k,1}(m)\) to denote the dueling probability estimate with \(m\) samples,
    inequality~\eqref{eq:defusion-substitute-L-suff} is by substituting \(L_k^{\suff}\) the definition of \(L_k^{\suff}\),
    inequality~\eqref{eq:bound-kl-delta-event} is for that the last two terms are bounded as \(O(\epsilon^{-2})\) in~\citet[Lemma 15]{honda2010asymptotically} and~\citet[Lemma 6]{komiyama2015regret}, respectively.

    Therefore, the final regret upper bound is given as follows,
    \begin{align}
        \E[\regret]
         & = \E[\regret \vert \mathcal{A}_t] + \E[\regret \vert \mathcal{A}_t^{\text{C}}]
        \\
         & \le \sum_{k\neq 1}  \frac{(\Delta_k^{\rew}/\alpha + \Delta_k^{\duel}/(1-\alpha))((1+\epsilon)\log T + f(K))}{\max\left\{\kl(\mu_k, \mu_1) / \alpha^2, \kl(\nu_{k,1}, \frac{1}{2}) / (1 - \alpha)^2\right\}}
        + O(K^2)  + O(\epsilon^{-2}) + O(e^{cK - f(K)}).
    \end{align}
    where we individual bound the two terms as follows,
    \begin{align}
        \E[\regret \vert \mathcal{A}_t]
         & \le \sum_{k\neq 1} \alpha\Delta_k^{\rew} \E[N_{k,T}]
        + (1-\alpha)\Delta_k^{\duel} \E[M_{k,1, T}]
        \\
         & \le\sum_{k\neq 1}  \frac{(\alpha(1-\beta)\Delta_k^{\rew} + (1-\alpha)\beta\Delta_k^{\duel})((1+\epsilon)\log T + f(K))}{\max\left\{(1-\beta)\kl(\mu_k, \mu_1), \beta \kl(\nu_{k,1}, \frac{1}{2})\right\}}
        + O(\epsilon^{-2})
        \\
         & \le \sum_{k\neq 1}  \frac{(\Delta_k^{\rew}/\alpha + \Delta_k^{\duel}/(1-\alpha))((1+\epsilon)\log T + f(K))}{\max\left\{\kl(\mu_k, \mu_1) / \alpha^2, \kl(\nu_{k,1}, \frac{1}{2}) / (1 - \alpha)^2\right\}}
        + O(\epsilon^{-2}),
    \end{align}
    where the last inequality is by substituting the definition of parameter \(\beta\), and
    \begin{align}
        \E[\regret \vert \mathcal{A}_t^{\text{C}}]
         & \le
        \left( \alpha\Delta_2^{\rew} + (1-\alpha)\max_{k\in\mathcal{K}}\Delta_k^{\duel} \right)  \left( K(K-1) + \E\left[ \sum_{t=1}^T \1{\mathcal{A}_t^C} \right]\right)
        \\
         & \le K^2 + \E\left[ \sum_{t=1}^T \1{\mathcal{A}_t^C} \right]
        \le O(K^2) + O(e^{cK - f(K)}),
    \end{align}
    where the \(K(K-1)\) comes from the initialization of the algorithm (Algorithm~\ref{alg:initial-phase}).
\end{proof}

\section{Experimental Setup and Additional Experiments}\label{app:experiment-setup}

\subsection{Additional Experiments}
\begin{figure}[tb]
    \centering
    \subfigure[Reward regret \(\rregret\)]{
        \includegraphics[width=0.465\columnwidth]{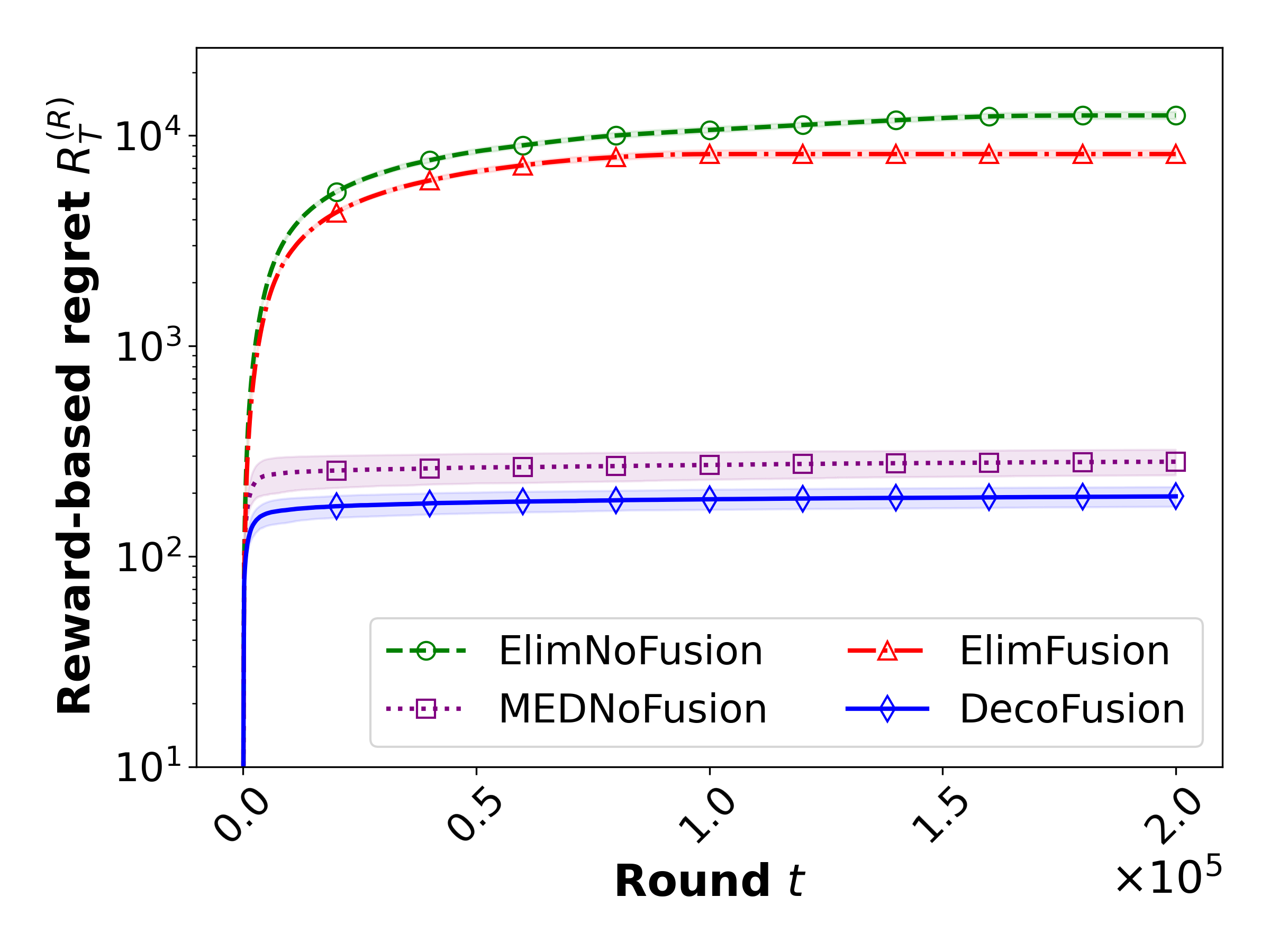}
        \label{fig:reward-regret}
    }
    \hfill
    \subfigure[Dueling regret \(\dregret\)]{
        \includegraphics[width=0.48\columnwidth]{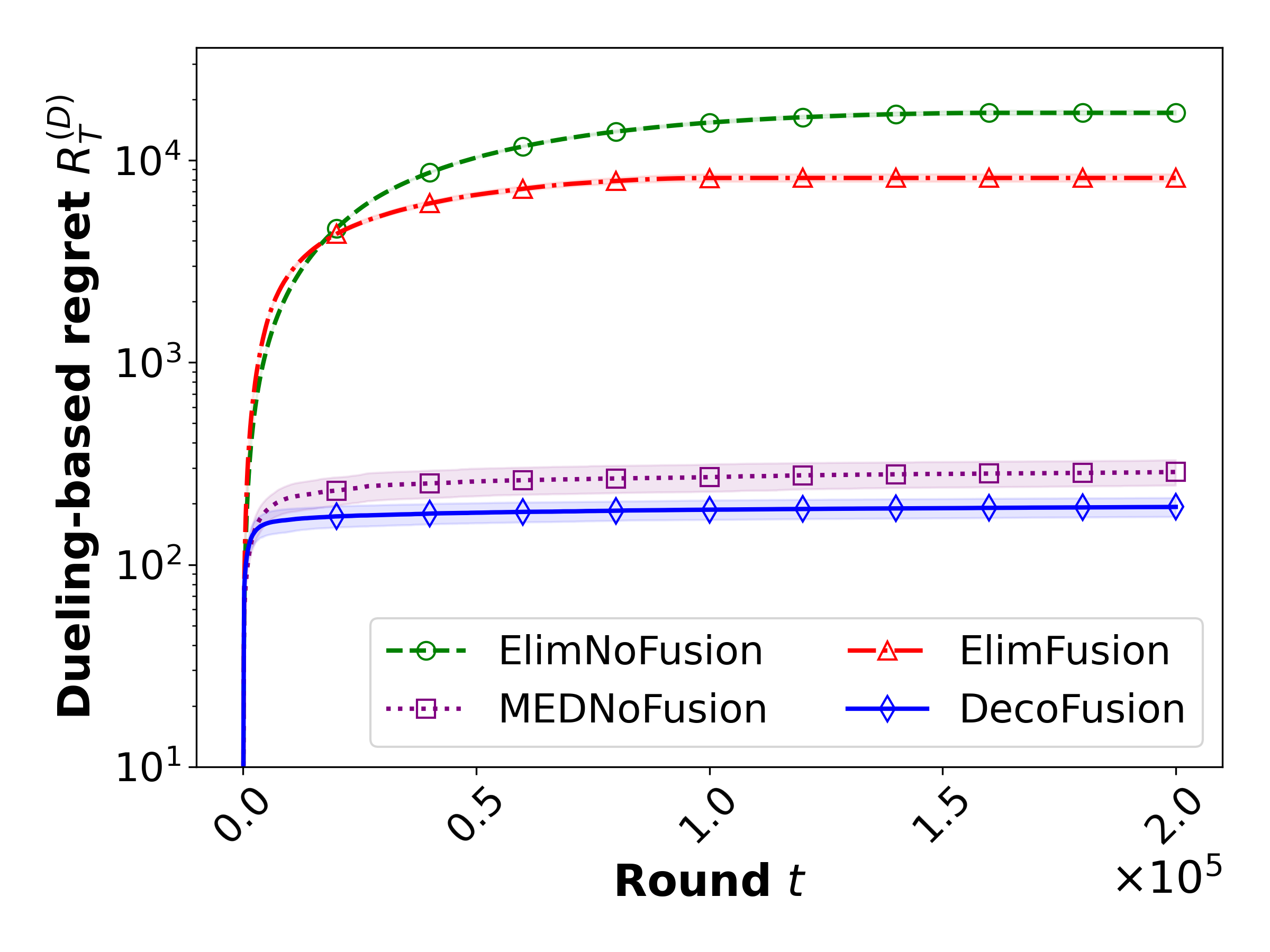}
        \label{fig:dueling-regret}
    }
    \caption{Regret comparison in different settings (continue)}
    \label{fig:experiments-conti}
\end{figure}

This section further reports other experiments in Figure~\ref{fig:experiments-conti}. Figures~\ref{fig:reward-regret} and~\ref{fig:dueling-regret} plots the decomposed reward- and dueling-based regrets of the aggregated regret in Figure~\ref{fig:final-regret} in the main paper. 
They illustrate that the two fusion algorithms outperform the no-fusion algorithms regarding reward-based and dueling-based regrets, respectively, highlighting the effectiveness of the fusion algorithms in mitigating regret costs to the other feedback.

\subsection{Experiment Setup}

The experiments of Figures~\ref{fig:final-regret},~\ref{fig:alpha-defusion},~\ref{fig:reward-regret},~\ref{fig:dueling-regret}
are conducted with $K=16$ arms, where their Bernoulli reward distributions are with means $\bm\mu=\{0.86, 0.80, 0.75, 0.70, 0.65, 0.60, 0.55, 0.50, 0.45, 0.40, 0.35, 0.30, 0.25, 0.20, 0.15, 0.10\}$. 
A dueling probability matrix \(\bm\nu\) determines the dueling feedback as follows,
\[
\left[
\begin{array}{cccccccccccccccc}
0.50& 0.54& 0.57& 0.60& 0.63& 0.65& 0.69& 0.71& 0.73& 0.76& 0.78& 0.82& 0.86& 0.91& 0.95& 0.98\\
0.46& 0.50& 0.54& 0.58& 0.61& 0.64& 0.67& 0.70& 0.74& 0.76& 0.79& 0.81& 0.84& 0.87& 0.89& 0.92\\
0.43& 0.46& 0.50& 0.54& 0.58& 0.60& 0.63& 0.66& 0.69& 0.72& 0.76& 0.79& 0.83& 0.85& 0.88& 0.91\\
0.40& 0.42& 0.46& 0.50& 0.54& 0.58& 0.61& 0.64& 0.66& 0.69& 0.72& 0.76& 0.79& 0.82& 0.85& 0.88\\
0.37& 0.39& 0.42& 0.46& 0.50& 0.54& 0.56& 0.59& 0.63& 0.66& 0.69& 0.72& 0.76& 0.78& 0.82& 0.86\\
0.35& 0.36& 0.40& 0.42& 0.46& 0.50& 0.54& 0.57& 0.59& 0.63& 0.67& 0.70& 0.73& 0.76& 0.79& 0.82\\
0.31& 0.33& 0.37& 0.39& 0.44& 0.46& 0.50& 0.54& 0.58& 0.61& 0.64& 0.68& 0.71& 0.72& 0.75& 0.79\\
0.29& 0.30& 0.34& 0.36& 0.41& 0.43& 0.46& 0.50& 0.54& 0.57& 0.59& 0.62& 0.65& 0.68& 0.72& 0.76\\
0.27& 0.26& 0.31& 0.34& 0.37& 0.41& 0.42& 0.46& 0.50& 0.54& 0.58& 0.61& 0.63& 0.66& 0.69& 0.73\\
0.24& 0.24& 0.28& 0.31& 0.34& 0.37& 0.39& 0.43& 0.46& 0.50& 0.54& 0.56& 0.59& 0.62& 0.66& 0.69\\
0.22& 0.21& 0.24& 0.28& 0.31& 0.33& 0.36& 0.41& 0.42& 0.46& 0.50& 0.54& 0.56& 0.58& 0.61& 0.65\\
0.18& 0.19& 0.21& 0.24& 0.28& 0.30& 0.32& 0.38& 0.39& 0.44& 0.46& 0.50& 0.54& 0.57& 0.58& 0.62\\
0.14& 0.16& 0.17& 0.21& 0.24& 0.27& 0.29& 0.35& 0.37& 0.41& 0.44& 0.46& 0.50& 0.54& 0.56& 0.59\\
0.09& 0.13& 0.15& 0.18& 0.22& 0.24& 0.28& 0.32& 0.34& 0.38& 0.42& 0.43& 0.46& 0.50& 0.54& 0.56\\
0.05& 0.11& 0.12& 0.15& 0.18& 0.21& 0.25& 0.28& 0.31& 0.34& 0.39& 0.42& 0.44& 0.46& 0.50& 0.54\\
0.02& 0.08& 0.09& 0.12& 0.14& 0.18& 0.21& 0.24& 0.27& 0.31& 0.35& 0.38& 0.41& 0.44& 0.46& 0.50
\end{array}
\right]
\]
where the value of $\nu_{i,j}$ between arm pairs \((i,j)\) is in row $i$ column $j$. The algorithms are run for $T=200,000$ rounds with the following parameters for \defusion and \elimfusion: \(\alpha = 0.5\), \(\delta = 1/T\), and \(f(K) = 0.05K^{1.01}\). Each experiment is repeated \(100\) times, and we report the average regret and the standard deviation of all runs.

Then, Figures~\ref{fig:reward-gap} and~\ref{fig:dueling-gap} report the final aggregated regrets under the following two experiments.

\textbf{Fixing $\nu$, varying $\mu$:}
Fixing the dueling probability as in the matrix:
\[
    \bm\nu = 
    \left[
    \begin{array}{ccccc}
        0.50 & 0.53 & 0.56 & 0.59 & 0.62\\
        0.47 & 0.50 & 0.53 & 0.56 & 0.59\\
        0.44 & 0.47 & 0.50 & 0.53 & 0.56\\
        0.41 & 0.44 & 0.47 & 0.50 & 0.53\\
        0.38 & 0.41 & 0.44 & 0.47 & 0.50
    \end{array}
    \right]
\]

Vary $\bm\mu = \{0.9, 0.9 - \Delta, 0.9 - 2\times \Delta, 0.9 - 3 \times \Delta, 0.9 - 4 \times \Delta\}$, where $\Delta \in \{0.06, 0.11, 0.16, 0.21\}$.

\textbf{Fixing $\mu$, varying $\nu$:} Fixing $\bm\mu = \{0.9, 0.84, 0.78, 0.72, 0.66\}$, we consider vary preference matrix in:
\[
\bm \nu = 
\left[
\begin{array}{ccccc}
    0.5 & 0.5 + 1 \times \Delta & 0.5 + 2 \times \Delta & 0.5 + 3 \times \Delta & 0.5 + 4 \times \Delta \\
    0.5 - 1 \times \Delta & 0.5 & 0.5 + 1 \times \Delta & 0.5 + 2 \times \Delta & 0.5 + 3 \times \Delta \\
    0.5 - 2 \times \Delta & 0.5 - 1 \times \Delta & 0.5 & 0.5 + 1 \times \Delta & 0.5 + 2 \times \Delta \\
    0.5 - 3 \times \Delta & 0.5 - 2 \times \Delta & 0.5 - 1 \times \Delta & 0.5 & 0.5 + 1 \times \Delta\\
    0.5 - 4 \times \Delta & 0.5 - 3 \times \Delta & 0.5 - 2 \times \Delta & 0.5 - 1 \times \Delta & 0.5 
\end{array}
\right]
\]
where $\Delta \in \{0.03, 0.05, 0.07, 0.09, 0.11\}$. All other settings of the two experiments are the same as above.




\end{document}